\documentclass{article}

\usepackage{microtype}
\usepackage{graphicx}
\usepackage{subcaption}
\usepackage{booktabs} 

\usepackage{hyperref}

\usepackage[preprint]{icml2026}

\usepackage{amsmath}
\usepackage{amssymb}
\usepackage{mathtools}
\usepackage{amsthm}

\usepackage[capitalize,noabbrev]{cleveref}

\theoremstyle{plain}
\newtheorem{theorem}{Theorem}[section]

\theoremstyle{definition}
\newtheorem{definition}[theorem]{Definition}

\theoremstyle{remark}

\theoremstyle{definition}
\newtheorem{example}{Example}

\usepackage{enumitem}
\usepackage{mathrsfs}
\usepackage[on]{editing}
\usepackage{xcolor}
\usepackage{wrapfig}
\usepackage{multirow}
\usepackage{booktabs} 
\usepackage{graphicx} 
\usepackage{adjustbox} 
\usepackage{caption}   
\usepackage{makecell} 

\newcommand{\independent}{\perp\mkern-9.5mu\perp}

\newcommand{\Inv}[3]{P_{#3}\left ( #1 ; #2 \right)}

\Crefname{equation}{Eq.}{Eqs.}
\Crefname{align}{Eq.}{Eqs.}
\Crefname{figure}{Fig.}{Figs.}
\Crefname{tabular}{Tab.}{Tabs.}
\Crefname{theorem}{Thm.}{Thms.}
\Crefname{lemma}{Lem.}{Lems.}
\Crefname{proposition}{Prop.}{Props.}
\Crefname{definition}{Def.}{Defs.}
\Crefname{algorithm}{Algo.}{Algs.}
\Crefname{corollary}{Corol.}{Corol.}
\Crefname{section}{Sec.}{Sec.}
\Crefname{appendix}{App.}{Apps.}
\Crefname{prop}{Prop.}{Props.}
\Crefname{task}{Task.}{Tasks.}
\Crefname{setting}{Setting.}{Settings.}
\Crefname{example}{Example}{Expamples}

\newcommand{\ci}{\independent}

\newcommand{\Brackets}[1]{\left[ #1\right]}

\newcommand{\Parens}[1]{\left(#1\right)}

\newcommand{\angles}[2][]{#1\langle#2 #1\rangle}

\newcommand{\I}{\boldsymbol{1}}

\newcommand{\G}{\mathcal{G}}
\newcommand{\M}{\mathcal{M}}
\newcommand{\Pa}{\mli{Pa}}
\newcommand{\PA}{\mli{PA}}
\newcommand{\pa}{\mli{pa}}
\newcommand{\De}{\mli{De}}

\newcommand{\An}{\mli{An}}
\newcommand{\an}{\mli{an}}
\newcommand{\de}{\mli{de}}

\newcommand{\Ch}{\mli{Ch}}
\newcommand{\ch}{\mli{ch}}
\newcommand{\doo}{\text{do}}

\newcommand{\tuple}[2][]{\angles[#1]{#2}}

\newcommand{\mli}[1]{\mathit{#1}}

\def\*#1{\boldsymbol{#1}}
\def\1#1{\mathcal{#1}}
\def\2#1{\mathscr{#1}}
\def\3#1{\mathbb{#1}}

\newcommand{\inv}[2]{P_{#2}\left ( #1\right)}

\usepackage{pgfplots}
\DeclareUnicodeCharacter{2212}{−}
\usepgfplotslibrary{groupplots,dateplot}
\usetikzlibrary{patterns,shapes.arrows}
\pgfplotsset{compat=newest}

\usetikzlibrary{positioning,arrows,shapes,shapes.arrows,arrows.meta,trees,shapes.misc,shapes.geometric,decorations.pathreplacing,automata}
\tikzset{%
  >={Latex[width=1.5mm,length=2mm]},
  vertex/.style={draw,circle,inner sep=0mm,semithick,minimum width=4mm},
  point/.style = {circle, draw, inner sep=0.04cm,fill,node contents={}},
  uvertex/.style={outer sep=0},
  bidir/.style={<->,dashed, line width=0.25mm},
  dir/.style={->, line width=0.25mm},
  regime/.style={shape=rectangle,fill=black,inner sep=0pt,minimum size=3pt,draw},
  action/.style={shape=circle,fill=black,inner sep=0pt,minimum size=8pt,draw},
  node distance=1cm,
  font=\scriptsize\sffamily
}

\definecolor{betterred}{RGB}{228,26,28}
\definecolor{betterblue}{RGB}{55,126,184}
\definecolor{bettergreen}{RGB}{77,175,74}
\definecolor{betterpurple}{RGB}{152,78,163}
\definecolor{LightCyan}{rgb}{0.88,1,1}

\pgfdeclarelayer{back}
\pgfsetlayers{back,main}

\makeatletter
\pgfkeys{%
  /tikz/on layer/.code={
    \def\tikz@path@do@at@end{\endpgfonlayer\endgroup\tikz@path@do@at@end}%
    \pgfonlayer{#1}\begingroup%
  }%
}
\makeatother

\makeatletter
\newcommand{\xdashleftrightarrow}[2][]{\ext@arrow 3359\leftrightarrowfill@@{#1}{#2}}
\def\rightarrowfill@@{\arrowfill@@\relax\relbar\rightarrow}
\def\leftarrowfill@@{\arrowfill@@\leftarrow\relbar\relax}
\def\leftrightarrowfill@@{\arrowfill@@\leftarrow\relbar\rightarrow}
\def\arrowfill@@#1#2#3#4{%
  $\m@th\thickmuskip0mu\medmuskip\thickmuskip\thinmuskip\thickmuskip
   \relax#4#1
   \xleaders\hbox{$#4#2$}\hfill
   #3$%
}
\makeatother

\usepackage{titletoc}

\icmltitlerunning{Causal Flow Q-Learning}

\begin{document}

\twocolumn[
  \icmltitle{Causal Flow Q-Learning for Robust Offline Reinforcement Learning}



  \icmlsetsymbol{equal}{*}

  \begin{icmlauthorlist}
    \icmlauthor{Mingxuan Li}{lab1}
    \icmlauthor{Junzhe Zhang}{lab2}
    \icmlauthor{Elias Bareinboim}{lab1}
  \end{icmlauthorlist}

  \icmlaffiliation{lab1}{Causal AI Lab, Columbia University, NY, United States}
  \icmlaffiliation{lab2}{Department of Electrical Engineering and Computer Science, Syracuse University, NY, United States}

  \icmlcorrespondingauthor{Mingxuan Li}{ml@cs.columbia.edu}

  \icmlkeywords{Machine Learning, ICML}

  \vskip 0.3in
]



\printAffiliationsAndNotice{}  

\begin{abstract}
  Expressive policies based on flow-matching have been successfully applied in reinforcement learning (RL) more recently due to their ability to model complex action distributions from offline data. These algorithms build on standard policy gradients, which assume that there is no unmeasured confounding in the data. However, this condition does not necessarily hold for pixel-based demonstrations when a mismatch exists between the demonstrator's and the learner's sensory capabilities, leading to implicit confounding biases in offline data. We address the challenge by investigating the problem of confounded observations in offline RL from a causal perspective. We develop a novel causal offline RL objective that optimizes policies' worst-case performance that may arise due to confounding biases. Based on this new objective, we introduce a practical implementation that learns expressive flow-matching policies from confounded demonstrations, employing a deep discriminator to assess the discrepancy between the target policy and the nominal behavioral policy. Experiments across 25 pixel-based tasks demonstrate that our proposed confounding-robust augmentation procedure achieves a success rate 120\% that of confounding-unaware, state-of-the-art offline RL methods.
\end{abstract}

\section{Introduction}
Offline reinforcement learning (RL) offers an alternative paradigm to traditional online RL, enabling effective policy learning for decision-making from previously collected observational data when active exploration in the underlying environment is costly, unsafe, or even impractical \citep{lange2012batch,levine2020offline}. The standard offline RL problem can be interpreted as a constrained optimization: the agent seeks to maximize long-term rewards while remaining close to the state-action distributions of the observed trajectories \citep{levine2020offline}. More recently, there is a growing body of offline RL that attempts to represent more complex, multimodal policy distributions using an expressive policy class that explicitly learns the policy's velocity field via deep generative modeling \citep{mandlekar2022matters,lipman2024flow}. These expressive policy classes include denoising diffusion \citep{wang2023diffusion} and flow-matching \citep{park2025fql}. They enable modeling more complex policy distributions in the dataset, thereby enforcing accurate behavioral constraints which is critical to offline RL algorithms \citep{tarasov2023rebrac}.

\begin{figure*}[t]
    \centering
    \begin{subfigure}[c]{0.25\linewidth}
        \centering
        \includegraphics[width=\linewidth]{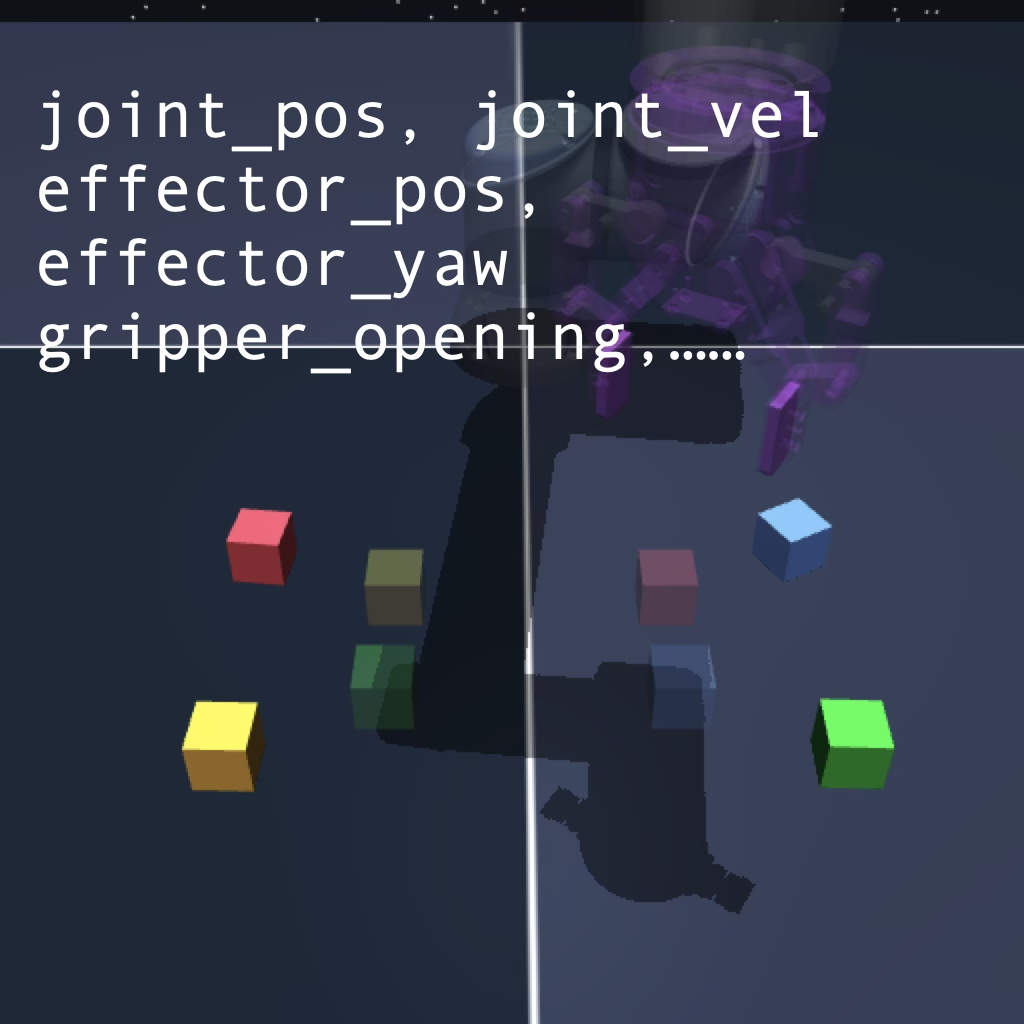}
        \caption{State Observations.}
        \label{fig:state}
    \end{subfigure}
    \hfill
    \begin{subfigure}[c]{0.25\linewidth}
        \centering
        \includegraphics[width=\linewidth]{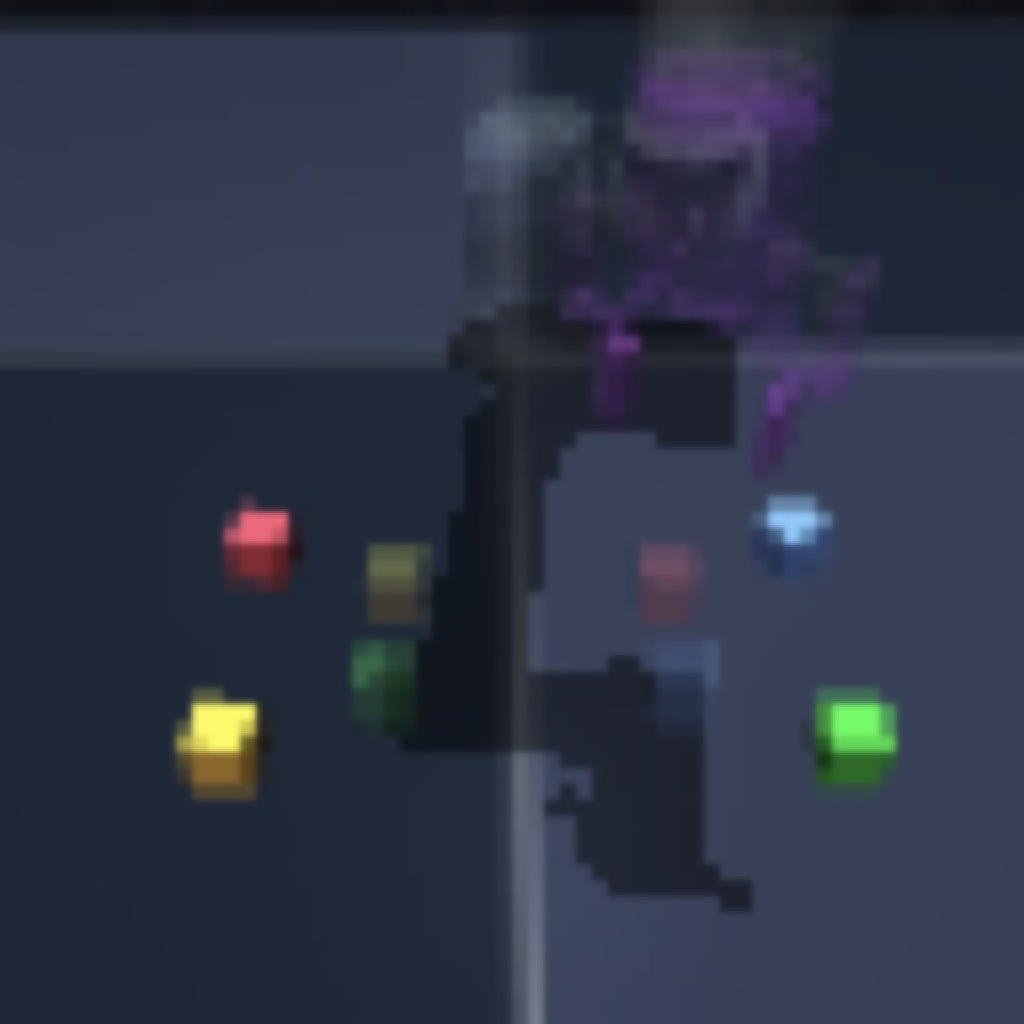}
        \caption{Pixel Observations.}
        \label{fig:pixel}
    \end{subfigure}
    \begin{subfigure}[c]{0.45\linewidth}
    \centering
    \includegraphics[width=.9\linewidth]{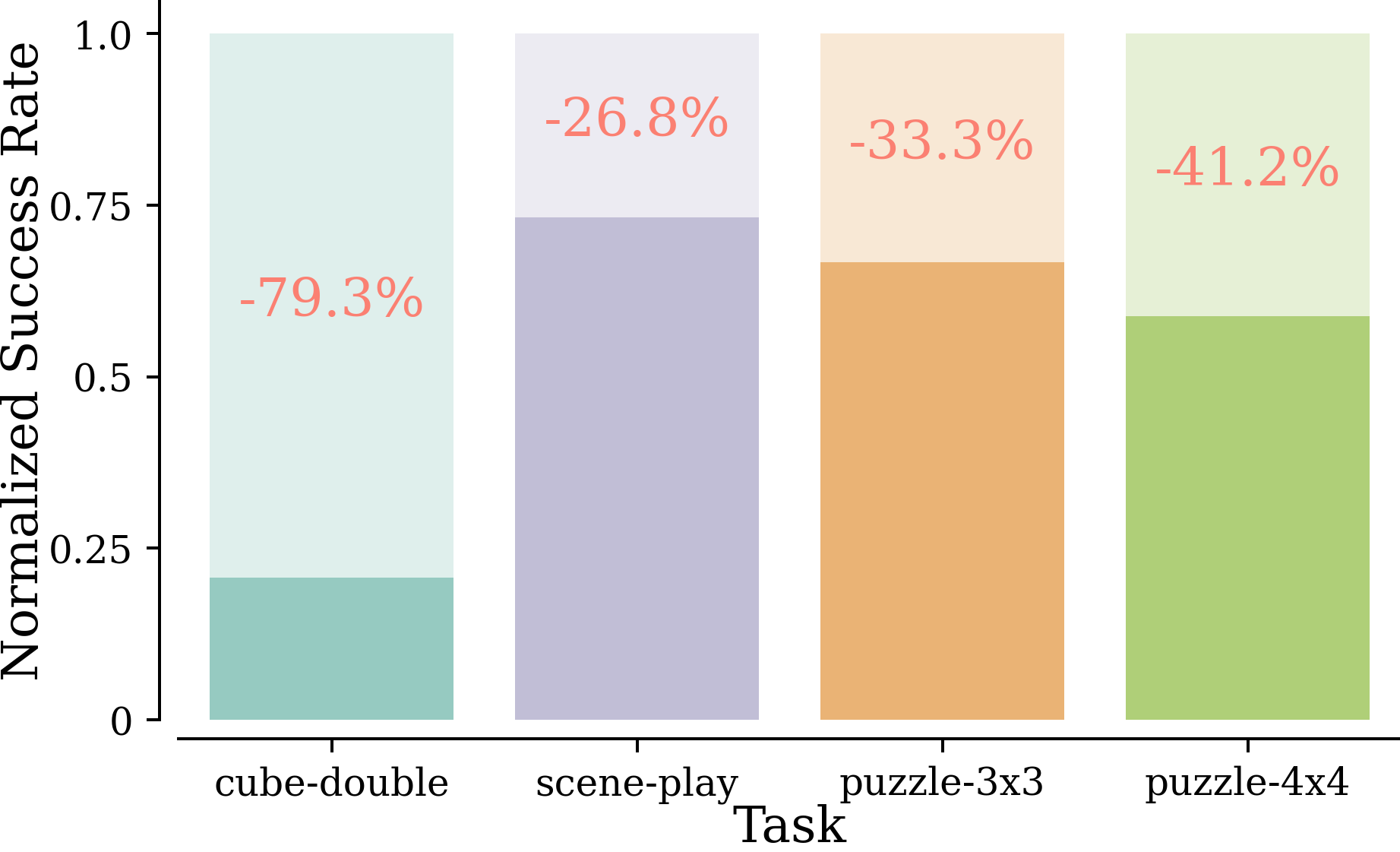}
    \caption{Performance Drop.}
    \label{fig:perf gap}
    \end{subfigure}
    \vspace{-.05in}
    \caption{\textbf{Confounding biases in offline data causes performance loss.} Left-middle: For pixel based tasks in offline RL datasets, expert actions are sampled based on true state vectors but presented with pixel observations during the offline learning stage. Right: SOTA offline RL algorithm performance drops sharply with pixel observations despite using image augmentations and strong neural encoders. }
    \label{fig:implicit bias}
    \vspace{-.2in}
\end{figure*}

Despite these progresses, all the algorithms described above build on the standard off-policy policy gradient framework \citep{sutton2018reinforcement}, which relies on the key assumption that the demonstrator's behavioral policy and the learner's target policy share the same support over the state-action domain \citep{precup2000eligibility}. When such conditions fail to hold, implicit confounding biases can be introduced into the observational data, consequently posing significant challenges for training RL systems from offline data \citep{crlsurvey,li2025confoundingdqn}. 

\begin{example}[Confounded Pixel-Based Observations]\label{exp:_bias}
\cref{fig:implicit bias} shows an example that illustrates challenges of implicit confounding bias in offline RL. Specifically, the expert actions are sampled based on true state observations, whereas the final pixel-based offline dataset pairs those actions with the corresponding image observations. The robot arm is tasked to move the colored cubes into the desired translucent positions. But pixel observations in \cref{fig:pixel}, a blurry front-view with occlusion, cannot fully capture the true state information shown in \cref{fig:state}, which underlies the expert's decisions. Learning from such datasets under the MDP assumption results in significant performance degradation despite that the same method works well in the same task with a structured state-observation space as the demonstrator does~\cite{park2025fql,dong2025valueflows}. $\hfill \blacksquare$
\end{example}

\vspace{-0.05in}
The problem of addressing confounding bias in decision-making has been studied under the rubrics of causal inference \citep{pearl2009causality} and, more recently, causal reinforcement learning \citep{crlsurvey}. Most methods explore additional prior knowledge about the underlying causal mechanisms under which the effect of the target policy is uniquely determined (i.e., identifiable) from the observational data \citep{10.5555/2074158.2074209seqbackdoor,crlsurvey}. When the causal knowledge is not available or too weak to ensure identifiability, partial identification methods could be incorporated with standard RL algorithms to reason about the optimal actions in the worst-case environments compatible with observations \citep{kallus2018confounding,zhang2019near,namkoong2020off,zhang2024eligibility,li2025confoundedshaping,li2025confoundingdqn,hess2025efficientsharpoffpolicylearning}. Functional approximators have been used to evaluate the worst-case policy return from confounded observations for environments with complex states but a simple, discrete action domain. Still, significant challenges remain in learning a robust policy from confounded, offline observations in an expressive policy class with continuous actions. \footnote{For a more detailed discussion on offline RL and causal decision-making, we refer readers to \Cref{app:related}.}

We overcome this challenge by formalizing the confounding issue in offline RL, developing a novel offline RL objective and then
employing flow-matching to model the complex state-action distributions in observed trajectories. More specifically, our contributions are as follows: (1) We introduce a novel offline RL objective that is robust against the implicit unobserved confounding in the offline data. Optimizing this objective leads to a safe policy selecting optimal actions in the worst-case environment. (2) We develop an algorithm, called \emph{Causal Flow Q-Learning} (CFQL), to optimize the newly proposed confounding robust objective over an expressive policy class, leveraging flow-matching to model the observed state-action distributions. We evaluate CFQL through extensive experiments. The result demonstrate a 120\% success rate improvement on average in 25 pixel-based offline RL tasks with some even surpassing policies using true state observations. Due to the space constraint, all proofs are provided in \Cref{app:proof}.

\textbf{Notations.} We will consistently use capital letters ($V$) to denote random variables, lowercase letters ($v$) for their values, and cursive $\1V$ to denote the their domains. We use bold capital letters ($\boldsymbol{V}$) to denote a set of random variables and let $|\boldsymbol{V}|$ denote its cardinality of set $\boldsymbol{V}$. Fix indices $i, j \in \3N$. Let $\bar{\*X}_{i:j}$ stand for a sequence $\{X_i, X_{i+1}, \dots, X_j\}$. We denote by $P(\*X)$ a probability distribution over variables $\*X$. We consistently use $P(\*x)$ as abbreviations of probabilities $P(\*X = \*x)$; so does $P(\*Y = \*y \mid \*X = \*x) = P(\*y \mid \*x)$. Finally, $\I_{\*Z = \*z}$ is an indicator function that returns $1$ if event $\*Z = \*z$ holds true; otherwise, it returns $0$. 

\section{Confounding Robust Policy Learning}
We investigate a sequential decision-making setting in which the agent chooses a sequence of actions to optimize subsequent rewards. We assume a Confounded Markov Decision Process (CMDP) to explicitly model the challenges of unobserved confounding in offline RL \citep{zhang2022can,bennett2021off,kallus2020confounding,zhang2025eligibility,li2025confoundedshaping,li2025confoundingdqn}.\footnote{See \cref{app:proof} for a detailed discussion on causal foundations and CMDPs as a relaxation to MDPs.}
\begin{definition}
\label{def:cmdp}
    A Confounded Markov Decision Process (CMDP) $\mathcal{M}$ is a tuple of $\langle \1S, \1X, \1Y, \1U, \1F, P \rangle$ where (1) $\1S, \1X, \1Y$ are, respectively, the space of observed states, actions, and rewards; (2) $\1U$ is the space of unobserved exogenous noise; (3) $\1F$ is a set consisting of the transition function $f_S: \1S \times \1X \times \1U \mapsto \1S$, behavioral policy $f_X: \1S \times \1U \mapsto \1X$, and reward function $f_Y: \1S \times \1X \times \1U \mapsto \1Y$; (4) $P$ is an exogenous distribution over the domain $\1U$.
\end{definition}

\begin{figure}[t]
\centering
		\begin{tikzpicture}
			\def\outerr{3.2}
			\def\innerr{3}
			\node[vertex] (S1) at (-1, -1) {S\textsubscript{1}};
			\node[vertex] (X1) at (0, 0) {X\textsubscript{1}};
			\node[vertex] (Y1) at (0, -2) {Y\textsubscript{1}};
			\node[vertex] (S2) at (1, -1) {S\textsubscript{2}};
			\node[vertex] (X2) at (2, 0) {X\textsubscript{2}};
			\node[vertex] (Y2) at (2, -2) {Y\textsubscript{2}};
			\node[vertex] (S3) at (3, -1) {S\textsubscript{3}};
			\node[vertex] (X3) at (4, 0) {X\textsubscript{3}};
			\node[vertex] (Y3) at (4, -2) {Y\textsubscript{3}};

			\draw[dir] (S1) to (S2);
			\draw[dir] (S1) to (X1);
			\draw[dir] (S1) to (Y1);
			\draw[dir] (X1) to (Y1);
			\draw[dir] (X1) to (S2);

			\draw[bidir, draw=betterblue] (X1) to [bend left = 30] (S2);
			\draw[bidir, draw=betterblue] (X1) to [bend left = 30] (Y1);
			\draw[bidir] (Y1) to [bend right = 30] (S2);

			\draw[dir] (S2) to (S3);
			\draw[dir] (S2) to (X2);
			\draw[dir] (S2) to (Y2);
			\draw[dir] (X2) to (Y2);
			\draw[dir] (X2) to (S3);

			\draw[bidir, draw=betterblue] (X2) to [bend left = 30] (S3);
			\draw[bidir, draw=betterblue] (X2) to  [bend left = 30] (Y2);
			\draw[bidir] (Y2) to [bend right = 30] (S3);

			\draw[dir] (S3) to (X3);
			\draw[dir] (S3) to (Y3);
			\draw[dir] (X3) to (Y3);
			\draw[dir] (S3) to node {} (4.7, -1);

			\draw[bidir, draw=betterblue] (X3) to [bend left = 30] (Y3);

			\begin{pgfonlayer}{back}
				\node[circle,fill=betterblue!65,draw=none,minimum size=2*\innerr mm] at (X1) {};
				\node[circle,fill=betterblue!65,draw=none,minimum size=2*\innerr mm] at (X2) {};
				\node[circle,fill=betterblue!65,draw=none,minimum size=2*\innerr mm] at (X3) {};
				\node[circle,fill=betterred!65,draw=none,minimum size=2*\innerr mm] at (Y1) {};
				\node[circle,fill=betterred!65,draw=none,minimum size=2*\innerr mm] at (Y2) {};
				\node[circle,fill=betterred!65,draw=none,minimum size=2*\innerr mm] at (Y3) {};
			\end{pgfonlayer}
		\end{tikzpicture}
	\caption{\textbf{Causal diagram representing the data-generating mechanisms in a Confounded Markov Decision Process.} Bi-directed arrows represent information used by the expert's policy but unobservable to the learner in the offline dataset.}
    \label{fig:_2_1_mdp}
\end{figure}
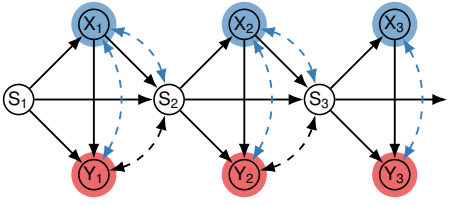

Throughout this paper, we assume the action domain $\1X$ to be continuous unless specified otherwise, while the state domain $\1S$ could be complex and continuous; the reward domain $\1Y$ is bounded in a real interval $[a, b] \subset \3R$. Consider a demonstrator agent interacting with a CMDP $\1M$, generating the off-policy data. For every time step $t = 1, \dots, T$, the environment first draws an exogenous noise $U_t$ from the distribution $P(\1U)$; the demonstrator then performs an action $X_t \gets f_X(S_t, U_t)$, receives a subsequent reward $Y_t \gets r_t(S_t, X_t, U_t)$, and moves to the next state $S_{t + 1} \gets f_S(S_t, X_t, U_t)$. The observed trajectories of the demonstrator (from the learner's perspective) are summarized as the observational distribution $P(\bar{\*X}_{1:T}, \bar{\*S}_{1:T}, \bar{\*Y}_{1:T})$.

\Cref{fig:_2_1_mdp} shows a graphical representation \citep{PCH} illustrating the generative process of the observational data in CMDPs. More specifically, solid nodes represent observed variables $X_t, S_t, Y_t$, and arrows represent the functional relationships $f_X, f_S, f_Y$ among them. By convention, exogenous variables $U_t$ are often not explicitly shown in the graph; bi-directed arrows $X_t \leftarrow \rightarrow Y_t$ and $X_t \leftarrow \rightarrow S_{t+1}$ indicate the presence of an unobserved confounder (UC) $U_t$ affecting the action, state, and reward simultaneously. 
These bi-directed arrows (highlighted in \textcolor{betterblue}{blue}) represent the unobserved confounders among action $X_t$, reward $Y_t$, and state $S_{t+1}$ in the off-policy data, violating the condition of no unobserved confounder (NUC) \citep{robbins1985some,crlsurvey}. Such violations lead to challenges in offline RL as we have seen above.

A policy $\pi$ in a CMDP $\1M$ is a decision rule $\pi(x_t \mid s_t)$ mapping from state to a distribution over action domain $\1X$. An intervention $\doo(\pi)$ is an operation that replaces the behavioral policy $f_X$ in CMDP $\1M$ with the policy $\pi$. Let $\1M_{\pi}$ be the submodel induced by intervention $\doo(\pi)$. The interventional distribution $P_{\pi}(\bar{\*X}_{1:T}, \bar{\*S}_{1:T}, \bar{\*Y}_{1:T})$ is defined as the distribution over observed variables in $\1M_{\pi}$,
\begin{align}
    &P_{\pi}(\bar{\*x}_{1:T}, \bar{\*s}_{1:T}, \bar{\*y}_{1:T}) = \nonumber \\
    &P(s_1) \prod_{t = 1}^{T} \bigg (\pi(x_t \mid s_t) \1T(s_t, x_t, s_{t+1})\1R(s_t, x_t, y_t)\bigg)
\end{align}
where the transition distribution $\1T$ and the reward distribution $\1R$ are given by, for $t = 1, \dots, T$,
\begin{align}
	&\1T(s_t, x_t, s_{t+1}) = \int_{\1U} \I_{s_{t+1} = f_S(s_t, x_t, u_t)} P(u_t),\\
    &\1R(s_t, x_t, y_t) = \int_{\1U} \I_{y_t = f_Y(s_t, x_t, u_t)} P(u_t).
\end{align}
For convenience, we write the reward function $\1R(s, x)$ as the expected value $\sum_{y} y \1R(s, x, y)$. Fix a discounted factor $\gamma \in [0, 1]$. A common objective for an agent is to optimize its cumulative return $R_t = \sum_{i = 0}^{\infty} \gamma^i Y_{t + i}$.  

\textbf{Offline RL.} In offline RL, when there is no unobserved confounder introducing spurious correlations between actions and subsequent outcomes, one can identify the parameterizations of the transition distribution $\1T$ and the reward function $\1R$ from offline data. This means that the expected return of candidate policies can be estimated from the sampling process of the offline data. An optimal policy is then obtainable by maximizing the estimated return (under proper behavioral regularization).

For example, let the state-action value function $Q_{\phi}(s, x) = \3E_{\pi}[R_t \mid S_t = s, X_t = x]$ denote the expected return of a target policy policy $\pi$ conditioning on state $s$ and action $x$ with parameter $\phi$. Let $\theta$ denote the parameter of $\pi$. The basic offline RL objective with behavior regularization is defined as \citep{wu2019behaviorregularizedofflinereinforcement,fujimoto2021a,tarasov2023rebrac}:
\begin{equation}
\begin{aligned}
    \1L(\theta) = \3E_{\substack{x, s \sim \1D,\\x'\sim\pi_\theta}}\bigg[ -Q_\phi(s,x') - \alpha \log \pi(x|s) \bigg], 
\end{aligned}\label{eq:_offline obj}
\end{equation}
where $\1D$ is the offline observations; 
and $\alpha$ are some coefficients that control the strength of the behavioral cloning (BC) regularizer. The Q-value critic $Q_{\phi}$ is trained by empirical Bellman loss on the offline dataset. Target policy is learned by applying gradient-based optimization to the objective in \Cref{eq:_offline obj}. When the NUC condition holds in the data, the target policy is guaranteed to improve and converge to the optimum under standard convexity assumptions \citep{zhang2020global}.

\textbf{Causal Reinforcement Learning.} However, the NUC condition does not always hold in real-world applications. Confounding bias can arise when the sensory capabilities of the demonstrator and the learner differ, posing challenges for offline learning algorithms. 

\begin{example}[Confounded Pixel Observations (continued)]
Consider the example in \Cref{fig:implicit bias} again, which illustrates the presence of implicit confounding bias in offline RL data. 
\Cref{fig:perf gap} shows the return of flow policies trained on pixels in comparison to the ones trained on the same task but structured state observations \citep{park2025fql,dong2025valueflows}. We see a consistent trend of performance loss across the board, with some even suffering a nearly 80\% drop. All success rate under pixel-based observations are normalized with respect to the performance under structured state observations of the same algorithm and averaged over all 5 tasks in each category from OGBench \citep{park2025ogbench}. $\hfill \blacksquare$
\end{example}
Recently, there has been a growing body of work in causal inference \citep{kallus2018confounding,zhang2019near,DBLP:conf/aaai/JoshiZB24safepolicycausal,namkoong2020off,li2025confoundingdqn} and safe reinforcement learning \citep{CQL2020Kumar,calql} to address the challenges of data bias and distribution shifts in policy learning. 
Closest to our setting, \citet{zhang2025eligibility} addressed the challenges of confounding bias in off-policy evaluation by deriving a novel \emph{Causal Bellman equation} to bound the value function from confounded observations. Specifically, the value function $Q_{\pi}(s, x)$ of a policy $\pi$ is lower bounded by a function $\underline{Q_{\pi}}(s, x)$ given by,
\begin{align}
	\underline{Q_{\pi}}(s, x) &= \mu(\neg x \mid s) \bigg (a + \gamma \min_{s'} \underline{V_{\pi}}(s') \bigg ) \label{eq:_3_lower_q_2} \\
	&+ \mu(x\mid s)\bigg (\widetilde{\1R}\Parens{s, x} + \gamma \sum_{s', x'} \widetilde{\1T}\Parens{s, x, s'} \underline{V_{\pi}}(s') \bigg )  \notag 
\end{align}
    where state value bound $\underline{V_{\pi}}(s) = \sum_{x} \pi(x \mid s) \underline{Q_{\pi}}(s, x)$ and $P(\neg x \mid s) = 1 - P(x \mid s)$; $\widetilde{\1T}$ and $\widetilde{\1R}$ are nominal transition distribution and reward function computed from the observational distribution, i.e., 
    \begin{align}
        &\widetilde{\1T}\Parens{s, x, s'} = P\Parens{S_{t+1} = s' \mid S_t = s, X_t = x}, \\
        &\widetilde{\1R}\Parens{s, x} = \3E\Brackets{Y_t \mid S_t = s, X_t = x}. \label{eq:nominal}
    \end{align}
The learner can then obtain an effective policy from confounded offline data by iteratively optimizing the above value-function lower bound over the action space \citep{li2025confoundingdqn}. However, the existing causal Bellman equation is limited to discrete actions and, consequently, cannot handle complex, multimodal continuous action distributions. The remainder of this paper will address this challenge.
\section{Causal Flow Q-Learning}
In this section, we introduce a causal offline RL objective, based on which we present Causal Flow Q-Learning. 

We note that for any policy $\pi$, the state value function is lower bounded by $\underline{V_{\pi}}(s) = \sum_{x} \pi(x \mid s) \underline{Q_{\pi}}(s, x)$, where $\underline{Q_{\pi}}(s, x)$ is given by \Cref{eq:_3_lower_q_2}. After a few simplifications, a closed-form solution can be derived, as shown next.
\begin{theorem}\label{thm:_3_1}
	For a CMDP environment $\M$ with reward signals $Y_t \in [a, b] \subseteq \3R$, fix a policy $\pi$. The state value function $V_{\pi}(s) \geq \underline{V_{\pi}}(s)$ for any state $s \in \1S$, where the lower bound $\underline{V_{\pi}}(s)$ is given by as follows,
	\begin{align}
		&\underline{V_{\pi}}(s) = \3E_{\substack{x \sim p(\cdot \mid s)\\ x', x^*\sim \pi(\cdot \mid s)}} \bigg[\I_{x \neq x'} \Big (a + \gamma \min_{s^*} \underline{Q_{\pi}}(s^*, x^*) \Big)   \notag \\
        &+ \I_{x = x'} \Big (\widetilde{\1R}\Parens{s, x} + \gamma \sum_{s'} \widetilde{\1T}\Parens{s, x, s'}\underline{Q_{\pi}}(s', x^*)\Big) \bigg] \label{eq:_lower_v}
	\end{align}
\end{theorem}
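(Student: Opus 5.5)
We start from the causal Bellman lower bound of \citet{zhang2025eligibility} in \Cref{eq:_3_lower_q_2}, taking as given that $Q_{\pi}(s,x) \geq \underline{Q_{\pi}}(s,x)$ for every state--action pair. Averaging over $x \sim \pi(\cdot \mid s)$ gives
\begin{align*}
V_{\pi}(s) = \3E_{x \sim \pi}[Q_{\pi}(s,x)] \geq \3E_{x \sim \pi}[\underline{Q_{\pi}}(s,x)] = \underline{V_{\pi}}(s).
\end{align*}
The inequality $V_\pi \geq \underline{V_\pi}$ is then immediate. The real content of \Cref{thm:_3_1} is that this averaged quantity can be written in the closed form of \Cref{eq:_lower_v}, so the proof is mainly a rewriting.

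First, write $\mu(x \mid s)$ as the nominal behavioral probability $p(x \mid s)$. Express the weights as expectations of indicators:
\begin{align*}
\mu(x' \mid s) = \3E_{x \sim p(\cdot\mid s)}[\I_{x = x'}], \qquad \mu(\neg x' \mid s) = \3E_{x \sim p(\cdot\mid s)}[\I_{x \neq x'}].
\end{align*}
Substituting these into \Cref{eq:_3_lower_q_2} and then averaging over $x' \sim \pi(\cdot\mid s)$ gives a double expectation over $x \sim p$ and $x' \sim \pi$. On the event $x = x'$, the nominal terms $\widetilde{\1R}(s,x')$ and $\widetilde{\1T}(s,x',\cdot)$ may be evaluated at $x$ instead of $x'$. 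This produces the indicator structure of \Cref{eq:_lower_v}.

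Second, handle the continuation values. The bound $\underline{V_{\pi}}(s')$ appears both inside $\min_{s'}$ and inside the sum over $\widetilde{\1T}$. By definition it equals $\3E_{x^* \sim \pi(\cdot\mid s')}[\underline{Q_{\pi}}(s',x^*)]$. Introduce an independent draw $x^* \sim \pi$ and pull it outside, using linearity of the expectation and of $\sum_{s'}$. This turns the transition term into $\sum_{s'} \widetilde{\1T}(s,x,s')\,\underline{Q_{\pi}}(s',x^*)$ inside the outer expectation, and no inequality is needed for this term.

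The main obstacle is the $\min$ term, because the minimum does not commute with the expectation over $x^*$. The inner-min version is smaller:
\begin{align*}
\3E_{x^*}\big[\min_{s^*} \underline{Q_{\pi}}(s^*, x^*)\big] \leq \min_{s^*} \3E_{x^*}\big[\underline{Q_{\pi}}(s^*,x^*)\big] = \min_{s^*} \underline{V_{\pi}}(s^*).
\end{align*}
This is Jensen's inequality for the concave $\min$, or equivalently the fact that the pointwise minimizer is dominated. Since this term enters with the nonnegative coefficient $\gamma\,\I_{x\neq x'}$, replacing it keeps a valid lower bound. So the expression in \Cref{eq:_lower_v} is at most the averaged \Cref{eq:_3_lower_q_2} bound, which is in turn at most $V_\pi(s)$.

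This proves the stated inequality. If the right-hand side is instead meant to be read recursively (with $\underline{Q_\pi}$ defined through the new $\underline{V_\pi}$), we would close the argument with a contraction and monotonicity step. The Bellman-type operator is a $\gamma$-contraction that is monotone in its argument, so its fixed point is dominated by the fixed point of the true value operator, which again yields $V_\pi \geq \underline{V_\pi}$. Finally, for continuous $\1X$ the event $\I_{x=x'}$ has measure zero, so we would interpret it via densities or via a discretization limit, as done implicitly in the paper.
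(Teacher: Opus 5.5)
Your proposal follows the paper's proof: average the causal Bellman bound \Cref{eq:_3_lower_q_2} over $\pi$, then rewrite the $\mu\pi$ weights as expectations of $\I_{x=x'}$ and $\I_{x\neq x'}$. Rewriting directly with indicators makes the paper's intermediate swap $\sum_x \pi(x\mid s)\mu(\neg x\mid s)=\sum_x \mu(x\mid s)\pi(\neg x\mid s)$ unnecessary. Your Jensen step on the $\min$ term is a genuine improvement: the paper silently treats $\min_{s'}\underline{V_\pi}(s')$ and $\3E_{x^*}[\min_{s^*}\underline{Q_\pi}(s^*,x^*)]$ as equal, but only your inequality holds in general, and it points in the direction the bound needs. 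One caveat: the equality $\min_{s^*}\3E_{x^*}[\underline{Q_\pi}(s^*,x^*)]=\min_{s^*}\underline{V_\pi}(s^*)$ requires $x^*$ to be drawn from $\pi(\cdot\mid s^*)$ through shared noise (e.g.\ $x^*=\pi(s^*,z)$, as in the one-step policy), not from $\pi(\cdot\mid s)$ as the statement literally writes.
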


\Cref{fig:_3_backup} shows a backup diagram illustrating this update step. As in the standard Bellman optimality equation \citep{bellmanDynamicProgramming1966}, \Cref{eq:_lower_v} recursively updates the state value function using the current estimates of the optimal value function. 

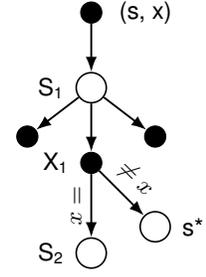
\begin{wrapfigure}[12]{r}{0.19\textwidth}
\centering
	\begin{tikzpicture}
		\def\outerr{3}
		\def\innerr{2.7}

		\node[action, label={[label distance=0.04in]0: \small (s, x)}] (start) at (0, 0) {};
		\node[vertex, label={[label distance=0.01in]180: \small S\textsubscript{1}}] (S1) at (0, -1) {};
		\node[action, label={[label distance=0.01in]180: \small X\textsubscript{1}}] (X1) at (0, -2) {};
		\node[vertex, label={[label distance=0.01in]0: \small s*}] (S2p) at (0.85, -2.85) {};
		\node[action] (X1l) at (-0.85, -1.65) {};
		\node[action] (X1r) at (0.85, -1.65) {};
		\node[vertex, label={[label distance=0.01in]180: \small S\textsubscript{2}}] (S2) at (0, -3.2) {};

		\draw[dir] (start) -- (S1);
		\draw[dir] (S1) -- (X1);
		\draw[dir] (S1) -- (X1l);
		\draw[dir] (S1) -- (X1r);
		\draw[dir] (X1) -- (S2)node [below, midway, sloped] {\small $= x$};
		\draw[dir] (X1) -- (S2p) node [above, midway, sloped] {\small $\neq x$};
	\end{tikzpicture}
	\caption{Backup diagram for causal policy gradient.}
	\label{fig:_3_backup}
\end{wrapfigure}

On the other hand, \Cref{eq:_lower_v} explicitly accounts for the off-poicy nature of the confounded observations: when the behavior policy takes the same action $x_t = x$ as the target action, the update follows standard Bellman equation and uses the next sampled state $s_t$; when the sampled action $x_t \neq x$ differs from the target, our algorithm updates, instead, using the value function associated with the next worst-case or best-case state $s^*$, corresponding to the estimation of the lower bound and upper bound respectively.

\Cref{thm:_3_1} lower bounds the expected return of a policy $\pi$ that optimizes a worst-case CMDP instance $\underline{\1M}$ compatible with the confounded observations. Maximizing the lower bound in \Cref{thm:_3_1} leads to a safe policy with a performance guarantee in the underlying environment. In causal offline RL objective, this bound is used as the Q loss in \cref{eq:_offline obj} and can be calculated with function approximators. Specifically, let $\theta$ denote the parameters of the target policy $\pi$, and $\1D$ be the finite set of offline observations. Then, the causal offline RL objective is estimated from the observational data as,
\begin{align}
    &\1L(\theta) = 
    \3E_{\substack{(s, x, y, s') \sim \1D \\ x', x^* \sim \pi_{\theta}(\cdot \mid s)}} \bigg[D(s, x, x') \Big (y + \gamma \underline{Q_{\pi_{\theta}}}(s', x^*)\Big)  \notag \\
        &+ (1 - D(s, x, x')) \Big (a + \gamma \min_{s^*} \underline{Q_{\pi_{\theta}}}(s^*, x^*) \Big) \bigg]  \label{eq:_cpg} 
\end{align}
Among the above quantities, $D(s,x,x')$ is a discriminator that approximates the indicator function $\I_{x = x'}$ at state $s$. It takes as input a state action pair, $s, x$, and returns $1$ if the policy distribution $\pi_\theta(x|s)$ agrees with offline data. $x^*$ is an overloaded notation for next step actions in general sampled from $\pi_\theta$. We will discuss the training details further next.

We incorporate the confounding robust Q-value target in \Cref{eq:_cpg} to offline RL objective, based on Flow Q-Learning (FQL) introduced by \citet{park2025fql}.
Note that the proposed causal offline RL objective is not limited to FQL, and could be incorporated into other gradient-based policy learning to mitigate the influence of confounding bias. For example, we demonstrate an alternative implementation in \cref{app:value flow} using value flows \citep{dong2025valueflows}.

Similar to FQL, our proposed augmentation, called Causal FQL (CFQL), focuses on learning two policies during training: (1) a behavioral cloning (BC) policy $\mu_{\omega}$ mimicking the nominal behavioral policy distribution $P(X_t \mid S_t)$ from the observational data; (2) a target policy $\pi_{\theta}$ deciding the learner's actions. Specifically, the BC policy $\mu_{\omega}(s, z)$ is a continuous normalizing flow \citep{lipman2022flow} taking a state $s$ and an independent noise $z$ as input. It transforms the noise $z$ into an action $x$ drawn from the nominal behavioral distribution $P(X_t \mid S_t)$ following a multi-step denoising procedure. On the other hand, the target policy $\pi_{\theta}(s, z)$ is a one-step generative model directly mapping a noise $z$ into an instance $x$ in the action domain. 

Now we describe the details of our proposed algorithm, Causal FQL (\Cref{alg:_3_cfql}). CFQL contains four main components: an ensemble of critic networks $Q_{\phi_i}$, a BC flow policy $\mu_{\omega}$, a one-step target policy $\pi_{\theta}$, and a discriminator $D_{\psi}$ differentiating the actions drawn from the BC and target policies. More specifically, Steps 4-9 train the ensemble of critic Q-networks $Q_{\phi_i},i = 1, 2, \dots, N$. Each network $Q_{\phi_i}$ is trained from the confounded observational data following the standard minimization of the one-step Bellman update (Step 8) over the average of Q ensembles $\bar{Q}_{\phi}(s',x') = \frac{1}{N}\sum_i {Q}_{\phi_i}(s',x')$.

From Steps 10 - 15, the algorithm trains the velocity field $v_{\omega}(t, s, x^t)$ of a BC flow policy $\mu_{\omega}$. It models the transformation of an independent noise $x^0$ to the observed action $x^1$ as trajectories of a free particle moving at a constant speed. The velocity field $v_{\omega}(t, s, x^t)$ is a neural network recording the gradient of the trajectory at any time step $t$. The training follows the standard flow-matching procedure \citep{lipman2022flow,lipman2024flow}.
Steps 16 - 20 train a discriminator network $D_{\psi}$ in the causal policy gradient \Cref{eq:_cpg} to differentiate the samples drawn from the BC flow $\mu_{\omega}$ and target policy $\pi_{\theta}$. Specifically, the algorithm obtains samples drawn from $\mu_{\omega}(s, z)$ and $\pi_{\theta}(s, z)$ given the observed state $s$ and an independent noise $z$ (Steps 17-19). The samples of the BC flow policy are computed from the trained velocity field following the standard Euler method (\Cref{alg:_3_bc}) 
The discriminator $\1 D_\psi$'s training objective is then given by,
\begin{align}
    \1 L_{\operatorname{Disc}}(\psi) = &\mathbb{E}_{s\sim \1 D, z \sim \1 N(0, I_d)}\big[ \log D_\psi(s, \mu_{\omega}(s, z)) \notag \\
    &+  \log(1 - \1D_\psi(s,\pi_{\theta}(s, z)))\big]
\end{align}
Minimizing the above discriminator loss is equivalent to solving a binary classification problem where class $1$ represents the action $x$ is from the BC flow policy $\mu_\omega$ while class $0$ represents that the action is sampled from the one-step target policy $\pi_\theta$ given the current state $s$.

Finally, CFQL trains a one-step target policy $\pi_{\theta}$ optimizing the worst-case return compatible with the confounded observations in Steps 21 - 24. 
The key challenge here is to obtain a reliable estimation for the worst-case value funciton $\min_{s^*} \underline{Q_{\pi_{\theta}}}(s^*, x^*)$ in \Cref{eq:_cpg} without being overly pessimistic. To address this challenge, instead of taking the minimum over the whole state space or the whole batch of data~\citep{li2025confoundingdqn}, we propose to learn ensembles of Q-networks and take the minimum over the learned ensembles to simulate the worst case scenario.

\begin{algorithm}[H]
	\caption{Causal Flow Q-Learning (\texttt{Causal FQL})}
	\label{alg:_3_cfql}
	\setlength{\textfloatsep}{0pt}
	\begin{algorithmic}[1]
    \STATE {\bfseries Input:} Offline dataset $\1D$.
        \WHILE{not converged}
            \STATE Sample batch $\{(s,x,y,s')\}\sim \1D$

            \item[] \textcolor{betterblue}{$\triangledown$ Train critic $Q_{\phi_i}$}
            \FOR{$i = 1, \dots, N$} 
            \STATE $z \sim \1N(0, I_d)$
            \STATE $x' \gets \pi_{\theta}(s', z)$,
            \STATE Update $\phi_i$ to minimize: \\ $\3E\big[(Q_{\phi_i}(s,x) - y - \gamma \bar{Q}_{\phi}(s',x') )^2 \big]$
            \ENDFOR

            \item[] \textcolor{betterblue}{$\triangledown$ Train BC flow policy $\mu_{\omega}$}
            \STATE $x^0 \sim \1N(0, I_d)$ 
            \STATE $x^1 \gets x$ 
            \STATE $t \sim \text{Unif}(0, 1)$
            \STATE $x^t \gets (1 - t)x^0 + x^1$
            \STATE Update $\omega$ to minimize $\3E\big[ \lVert v_{\omega}(t, s,x^t) - (x^1 - x^0)\rVert^2 \big]$

            \item[] \textcolor{betterblue}{$\triangledown$ Train discriminator $D_{\psi}$}
            \STATE $z \sim \1N(0, I_d)$
            \STATE $x \gets \mu_{\omega}(s, z)$
            \STATE $x' \gets \pi_{\theta}(s, z)$
            \STATE Update $\psi$ to minimize:\\ $\mathbb{E}\big[ \log(1 - \1D_\psi(s,x)) + \log D_\psi(s, x') \big] $
            
            \item[] \textcolor{betterblue}{$\triangledown$ Train one-step target policy $\pi_{\theta}$}
            \STATE $z \sim \1N(0, I_d)$
            \STATE $x \gets \pi_{\theta}(s, z)$
            \STATE Update $\theta$ to minimize: \\ $\3E\big[-\underline{Q_{*}}(s, x) +\alpha \lVert x - \mu_{\omega}(s, z) \rVert^2 \big ]$
        \ENDWHILE
    \STATE {\bfseries Return} one-step flow policy $\pi_\theta$.
	\end{algorithmic}
\end{algorithm}
\begin{algorithm}[H]
	\caption{BC Flow Policy $\mu_{\omega}(s, z)$}
	\label{alg:_3_bc}
	\setlength{\textfloatsep}{0pt}
	\begin{algorithmic}[1]
    \STATE {\bfseries Input:} Velocity field $v_{\omega}(t, s, z)$, time step $M$.
        \FOR{$t = 0, 1, \dots, M - 1$}
        \STATE $z \gets z + v_{\omega}(t/M, s, z)/M$
        \ENDFOR
    \STATE {\bfseries Return} sampled action $z$
	\end{algorithmic}
\end{algorithm}

\begin{table*}[ht]
\caption{\textbf{Aggregated offline evaluation results}. We report aggregated mean success rate and mean deviation across different task groups. Causal-FQL consistently outperforms all baselines across the task groups. When comparing against the original FQL, the confounding robust Causal-FQL achieves a 1.2$\times$ success rate improvement on average.}
\centering
\resizebox{\textwidth}{!}{%
\begin{tabular}{lcccccccc}
\toprule
& \multicolumn{3}{c}{Gaussian Policies} & \multicolumn{4}{c}{Flow Policies} \\
\cmidrule(lr){2-4} \cmidrule(lr){5-8}
& IQL & ReBRAC & FBRAC & IQN & IFQL & FQL & \texttt{Causal-FQL} \\
\midrule
\texttt{visual-cube-single-play-singletask (5 tasks)} & $64 \pm 17$ & $\mathbf{82 \pm 7}$ & $64 \pm 10$ & $58 \pm 7$ & $44 \pm 9$ & $65 \pm 12$ & $\mathbf{81 \pm 6}$ \\
\texttt{visual-cube-double-play-singletask (5 tasks)} & $\mathbf{11 \pm 6}$ & $1 \pm 1$ & $2 \pm 1$ & $1 \pm 0$ & $2 \pm 2$ & $6 \pm 1$ & $\mathbf{11 \pm 3}$ \\
\texttt{visual-scene-play-singletask (5 tasks)}  & $26 \pm 5$ & $28 \pm 5$ & $11 \pm 1$ & $\mathbf{41 \pm 6}$ & $21 \pm 2$ & $\mathbf{41 \pm 4}$ & $\mathbf{43 \pm 1}$ \\
\texttt{visual-puzzle-3x3-play-singletask (5 tasks)} & $2 \pm 3$ & $20 \pm 1$ & $1 \pm 0$ & $19 \pm 1$ & $21 \pm 0$ & $20 \pm 1$ & $\mathbf{26 \pm 2}$ \\
\texttt{visual-puzzle-4x4-play-singletask (5 tasks)}  & $0 \pm 0$ & $5 \pm 1$ & $0 \pm 0$ & $\mathbf{13 \pm 3}$ & $4 \pm 7$ & $11 \pm 3$ & $\mathbf{13 \pm 2}$ \\
\midrule
\textbf{Normalized Mean} & 0.72 & 0.95 & 0.55 & 0.92 & 0.65 & 1.0 & \textbf{1.21} \\
\bottomrule
\end{tabular}%
}
\label{tab:aggregated_results}
\end{table*}

With the discriminator $D_\psi$, the value function evaluation in the causal policy gradient can be approximated as,
\begin{align}
    \underline{Q_{*}}(s, x) &= D_\psi(s,x)\bar{Q}_{\phi}(s,x) \notag \\
    &\phantom{xxx}+ (1 -D_\psi(s,x))\min_i Q_{\phi_i}(s,x)
    \label{eq:approx causal q obj}
\end{align}

And the one-step target policy is trained with optimizing the following objective function:
\begin{align}
    \1 L_\pi(\theta) &= \3E_{s\sim \1 D, x\sim\pi_\theta}\big[ - \underline{Q_{*}}(s, x)\big] +\alpha \1L_{\operatorname{Distill}}(\theta)
\end{align}
where $\alpha$ is a hyperparameter controlling the strength of the target policy regularizer; and $\1L_{\operatorname{Distill}}(\theta)$ is the distillation loss following the definition of \citep{park2025fql}, i.e., 
\begin{align}
    \1 L_{\operatorname{Distill}}(\omega) &= \3 E_{\substack{s\sim\1D,\\z\sim \1N(0, I_d)}}\big[ \Vert \pi_\theta(s,z) - \mu_\omega(s,z) \Vert_2^2 \big]
\end{align}
The CFQL proposed is similar to the original FQL, with an additional discriminator training step and regularization to improve worst-case value function estimation. This means that the gradient-based policy optimization could be consistently performed using the standard automatic differentiation framework \citep{baydin2018auto}.

\section{Experiments}

\begin{table*}[ht]
\centering
\label{tab:full offline}
\scriptsize 
\setlength{\tabcolsep}{3pt} 
\renewcommand{\arraystretch}{1.1} 
\newcommand{\tb}[1]{\mathbf{#1}} 
\caption{\textbf{Full offline evaluation results.} \texttt{Causal-FQL} outperforms the vanilla FQL on all domains and achieves the best or near-best results in 19 out of 25 domains. Due to high computation requirements, we report results averaged over 4 seeds and bold values within 95\% of the best performance of each domain and reuse results from prior work when available \citep{park2025fql,dong2025valueflows}.}
\resizebox{\textwidth}{!}{%
\begin{tabular}{l cc ccccc}
\toprule
& \multicolumn{2}{c}{Gaussian Policies} & \multicolumn{5}{c}{Flow Policies} \\
\cmidrule(lr){2-3} \cmidrule(lr){4-8}
& IQL & ReBRAC & FBRAC & IQN & IFQL & FQL & \texttt{Causal-FQL} \\
\midrule

\ttfamily visual-cube-single-play-singletask-task1-v0 (*) & $70 \pm 12$ & $\tb{83 \pm 6}$ & $55\pm 8$ & $5\pm4$ & $49\pm 7$ & $81\pm 12$ & $\tb{85 \pm 6}$\\
\ttfamily visual-cube-single-play-singletask-task2-v0 & $59\pm13$ & $\tb{85\pm6}$ & $70\pm11$ & $58\pm9$ & $45\pm11$ & $58\pm 12$ & $\tb{81 \pm 5}$\\
\ttfamily visual-cube-single-play-singletask-task3-v0 & $64\pm35$ & $\tb{88\pm4}$ & $81\pm4$ & $83\pm11$ & $67\pm6$ & $73\pm16$ & $\tb{84 \pm 5}$\\
\ttfamily visual-cube-single-play-singletask-task4-v0 & $68\pm12$ & $\tb{78\pm9}$ & $56\pm19$ & $\tb{78\pm3}$ & $34\pm13$ & $63\pm8$ & $\tb{74 \pm 5}$\\
\ttfamily visual-cube-single-play-singletask-task5-v0 & $59\pm14$ & $\tb{76\pm9}$ & $59\pm10$ & $64\pm9$ & $27\pm10$ & $49\pm12$ & $\tb{79 \pm 7}$\\
\midrule

\ttfamily visual-cube-double-play-singletask-task1-v0 (*) & ${34 \pm 23}$ & $4 \pm 4$ & $6 \pm 2$ & $4 \pm 1$ & $8 \pm 6$ & $23 \pm 4$ & $\tb{43 \pm 11}$\\
\ttfamily visual-cube-double-play-singletask-task2-v0 & $\tb{2 \pm 1}$ & $0 \pm 0$ & $2 \pm 2$ & $0 \pm 0$ & $0 \pm 0$ & $0 \pm 0$ & $\tb{2 \pm 1}$\\
\ttfamily visual-cube-double-play-singletask-task3-v0 & $\tb{7 \pm 4}$ & $2 \pm 2$ & $2 \pm 1$ & $0 \pm 0$ & $1 \pm 1$ & $4 \pm 2$ & ${4 \pm 2}$\\
\ttfamily visual-cube-double-play-singletask-task4-v0 & $\tb{1 \pm 1}$ & $0 \pm 0$ & $0 \pm 0$ & $0 \pm 0$ & $0 \pm 0$ & $0 \pm 0$ & $\tb{1 \pm 1}$\\
\ttfamily visual-cube-double-play-singletask-task5-v0 & $\tb{11 \pm 2}$ & $0 \pm 0$ & $0 \pm 0$ & $1 \pm 1$ & $2 \pm 1$ & $4 \pm 1$ & ${4 \pm 2}$\\
\midrule

\ttfamily visual-scene-play-singletask-task1-v0 (*) & $\tb{97 \pm 2}$ & $\tb{98 \pm 4}$ & $46 \pm 4$ & $\tb{95 \pm 2}$ & $86 \pm 10$ & $\tb{98 \pm 3}$ & $\tb{100 \pm 0}$ \\
\ttfamily visual-scene-play-singletask-task2-v0 & $21 \pm 16$ & $30 \pm 15$ & $0 \pm 0$ & $79 \pm 15$ & $0 \pm 0$ & ${86 \pm 8}$ & $\tb{92 \pm 3}$ \\
\ttfamily visual-scene-play-singletask-task3-v0 & $12 \pm 9$ & $10 \pm 7$ & $10 \pm 3$ & $\tb{31 \pm 14}$ & $19 \pm 2$ & $22 \pm 6$ & ${22 \pm 1}$\\
\ttfamily visual-scene-play-singletask-task4-v0 & $1 \pm 0$ & $0 \pm 0$ & $0 \pm 0$ & $0 \pm 0$ & $0 \pm 0$ & $1 \pm 1$ & $\tb{2 \pm 1}$\\
\ttfamily visual-scene-play-singletask-task5-v0 & $\tb{0 \pm 0}$ & $\tb{0 \pm 0}$ & $\tb{0 \pm 0}$ & $\tb{0 \pm 0}$ & $\tb{0 \pm 0}$ & $\tb{0 \pm 0}$ & $\tb{0 \pm 0}$\\
\midrule

\ttfamily visual-puzzle-3x3-play-singletask-task1-v0 (*) & $7 \pm 15$ & $88 \pm 4$ & $7 \pm 2$ & $84 \pm 1$ & $\tb{100 \pm 0}$ & $94 \pm 1$ & $\tb{98 \pm 1}$\\
\ttfamily visual-puzzle-3x3-play-singletask-task2-v0 & $0 \pm 0$ & $\tb{12 \pm 1}$ & $0 \pm 0$ & $6 \pm 2$ & $0 \pm 0$ & $0 \pm 0$ & ${1 \pm 0}$\\
\ttfamily visual-puzzle-3x3-play-singletask-task3-v0 & $0 \pm 0$ & $1 \pm 1$ & $0 \pm 0$ & $1 \pm 0$ & $2 \pm 1$ & $0 \pm 0$ & $\tb{3 \pm 2}$\\
\ttfamily visual-puzzle-3x3-play-singletask-task4-v0 & $1 \pm 1$ & $0 \pm 1$ & $0 \pm 0$ & $3 \pm 0$ & $1 \pm 0$ & $5 \pm 4$ & $\tb{8 \pm 3}$\\
\ttfamily visual-puzzle-3x3-play-singletask-task5-v0 & $0 \pm 0$ & $0 \pm 0$ & $0 \pm 0$ & $1 \pm 0$ & $0 \pm 0$ & $1 \pm 2$ & $\tb{19 \pm 4}$\\
\midrule

\ttfamily visual-puzzle-4x4-play-singletask-task1-v0 (*) & $0\pm0$ & $26\pm6$ & $0\pm 0$ & $21\pm3$ & $8\pm 15$ & $33\pm 6$ & $\tb{37 \pm 4}$ \\
\ttfamily visual-puzzle-4x4-play-singletask-task2-v0 & $0\pm0$ & $0\pm0$ & $1\pm1$ & $\tb{14\pm2}$ & $1\pm1$ & $1\pm2$ & ${4 \pm 1}$\\
\ttfamily visual-puzzle-4x4-play-singletask-task3-v0 & $0\pm0$ & $0\pm0$ & $0\pm0$ & $9\pm3$ & $9\pm15$ & $16\pm 6$ & $\tb{18 \pm 4}$\\
\ttfamily visual-puzzle-4x4-play-singletask-task4-v0 & $0\pm0$ & $1\pm1$ & $0\pm0$ & $\tb{12\pm4}$ & $2\pm2$ & $2\pm1$ & $\tb{6 \pm 2}$ \\
\ttfamily visual-puzzle-4x4-play-singletask-task5-v0 & $0\pm0$ & $0\pm0$ & $1\pm1$ & $\tb{11\pm4}$ & $0\pm0$ & $1\pm 1$ & ${1 \pm 1}$ \\
\bottomrule
\end{tabular}
}
\end{table*}

In this section, we evaluate the newly proposed Causal Flow Q-Learning on 25 OGBench~\citep{park2025ogbench} visual tasks. We aim to answer the following questions:
\begin{enumerate}[label=\textbf{Q\arabic*:}, leftmargin=23pt, topsep=0pt, parsep=0pt, itemsep=5pt]
    \item How does \texttt{Causal FQL} improve upon the vanilla FQL on pixel-based tasks?
    \item How sample efficient is \texttt{Causal FQL} on offline-to-online pixel-based tasks?
    \item What hyper-parameters matter for \texttt{Causal FQL}?
\end{enumerate}

\paragraph{Experiment Setup.} We evaluate our methods on offline RL tasks from the recently proposed OGBench~\citep{park2025ogbench}. As we consider solving the confounding biases in pixel-based offline RL tasks ()
\cref{fig:exp tasks}, previous benchmarks like D4RL~\citep{Fu2020D4RLDF} that only support structured state observations are not within our considerations. OGBench is originally proposed as a goal conditioned RL benchmark, thus we use its -\texttt{singletask} variants for compatibility with standard offline RL frameworks. More specifically, we choose manipulation tasks due to their multi-modal nature that is suitable for demonstration of flow based policy. For all tasks, we provide agents $64\times 64\times 3$ RGB image observations (e.g., \cref{fig:pixel}) with frame stacking and random crop as image augmentations. To maintain a fair comparison, we use similar sized value/policy/actor networks for all methods and use the image encoder from IMPALA~\citep{espeholtIMPALAScalableDistributed2018a} in all tasks. For the full setup details, see \Cref{app:exp detail}.

\begin{figure}[htbp]
    \centering
    \hfill
    \begin{subfigure}[b]{0.19\linewidth}
        \centering
        \includegraphics[width=\linewidth]{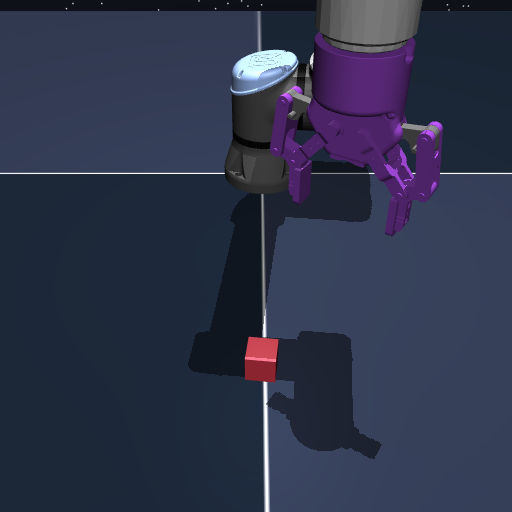}
    \end{subfigure}
    \hfill
    \begin{subfigure}[b]{0.19\linewidth}
        \centering
        \includegraphics[width=\linewidth]{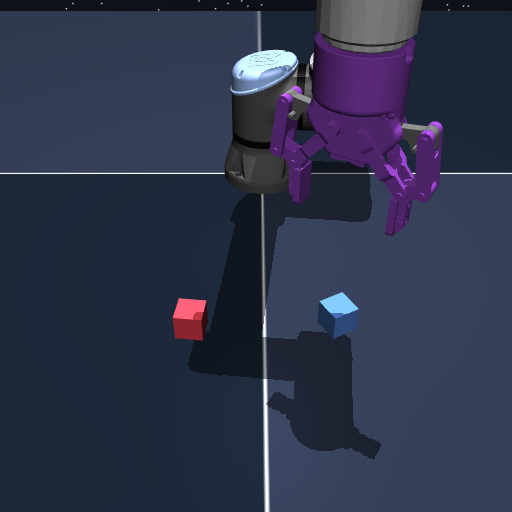}
    \end{subfigure}
    \hfill
    \begin{subfigure}[b]{0.19\linewidth}
        \centering
        \includegraphics[width=\linewidth]{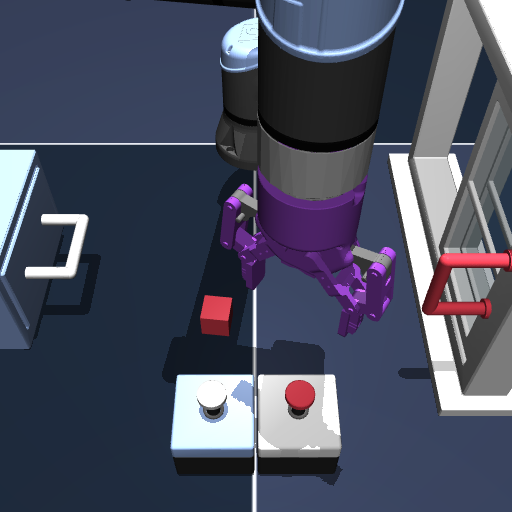}
    \end{subfigure}
    \hfill
    \begin{subfigure}[b]{0.19\linewidth}
        \centering
        \includegraphics[width=\linewidth]{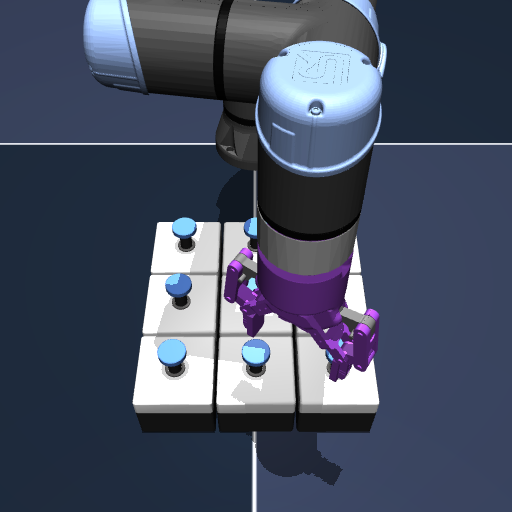}
    \end{subfigure}
    \hfill
    \begin{subfigure}[b]{0.19\linewidth}
        \centering
        \includegraphics[width=\linewidth]{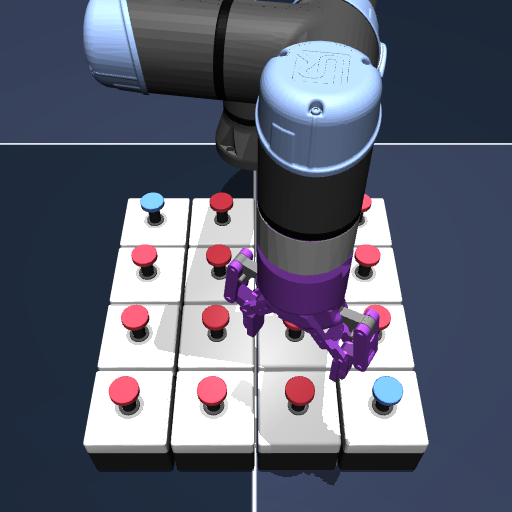}
    \end{subfigure}
    \hfill
    \label{fig:exp tasks}
    \caption{\textbf{Pixel-based OGBench Tasks}.}
\end{figure}

\paragraph{Baselines.} We use the following algorithms representing different policy extraction strategies as baselines to the offline learning tasks. Due to high computational requirements of visual tasks, we mainly use baseline algorithms that have demonstrated consistent performance in prior work~\citep{dong2025valueflows,park2025fql}.
We choose IQL~\citep{kostrikov2021iql} and ReBRAC~\citep{tarasov2023rebrac} as representatives for the class of algorithms learning scalar Q-values with a Gaussian policy. And we choose IQN~\citep{pmlr-v80-dabney18aiqn} as distributional Q-values with flow policy baselines, FBRAC, IFQL, and FQL~\citep{park2025fql} as scalar Q-values critic flow policy baselines. FBRAC and IFQL are flow policy based adoption to the original algorithm of Diffusion-QL (DQL) \citep{wang2023diffusion} and IDQL~\citep{idql}, respectively. 

For offline-to-online experiments, we mainly use ReBRAC and FQL as baselines because of their relatively strong performance than other baselines in the offline experiments. 

\paragraph{Evaluation.} For offline experiments, we run 500K steps training for all baselines and \texttt{Causal-FQL} proposed, averaging results over 4 seeds and present the results with mean and standard deviations. In the tables, we highlight entries that are within 95\% best performance on each task, following~\citep{park2025fql}. For offline-to-online experiments, we use the same setup during the offline stage and extend training to the online stage for an additional 500K steps. See \Cref{app:exp detail,app:implement detail} for implementation and experiment details.


\subsection{Offline Performance}
\begin{figure*}[t]
\centering
    \hfill
	\begin{subfigure}{0.33\linewidth}\centering
		\includegraphics[width=\linewidth]{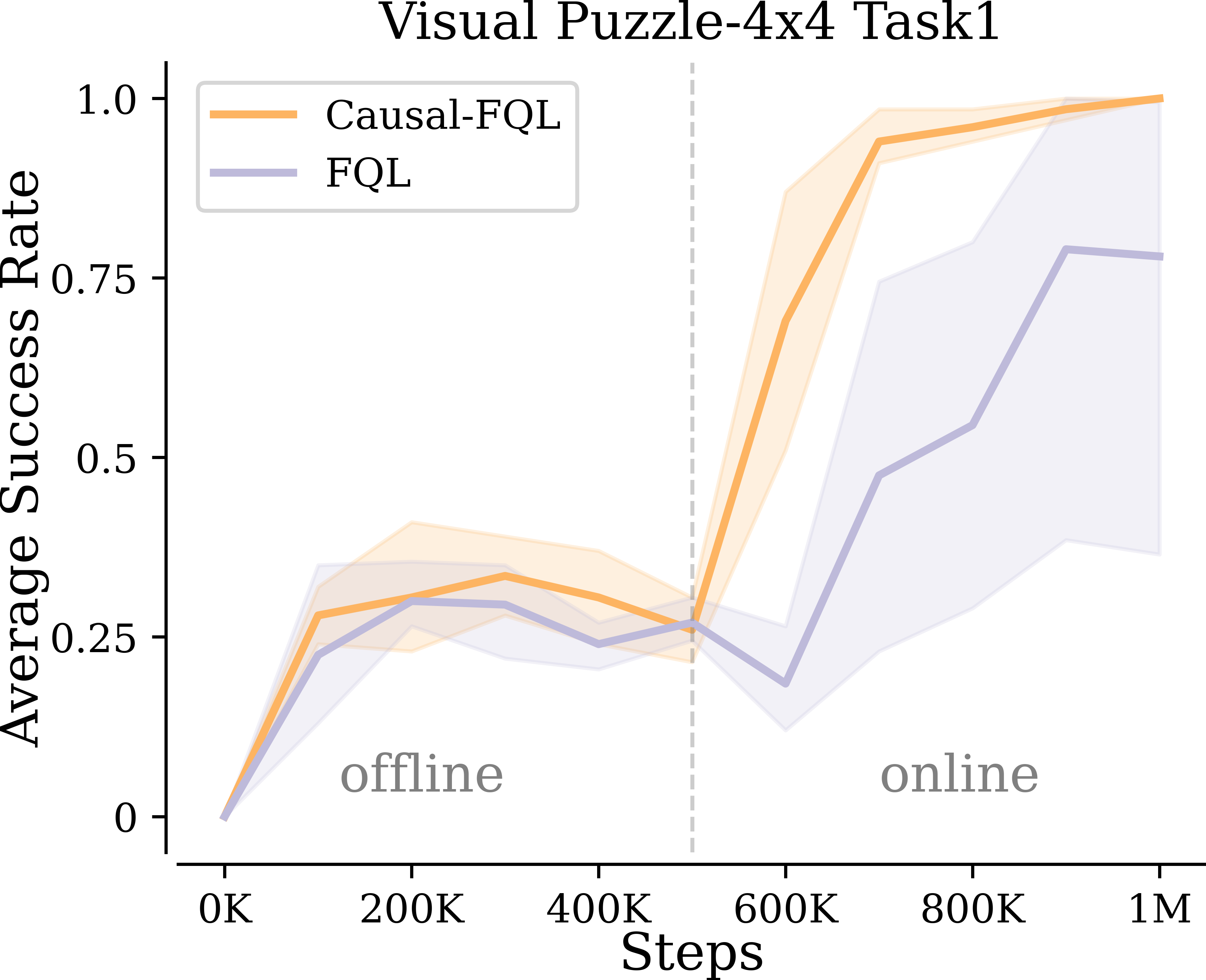}
		\label{fig:_5_2_puzzle}
	\end{subfigure}
    \hfill
    \begin{subfigure}{0.33\linewidth}\centering
		\includegraphics[width=\linewidth]{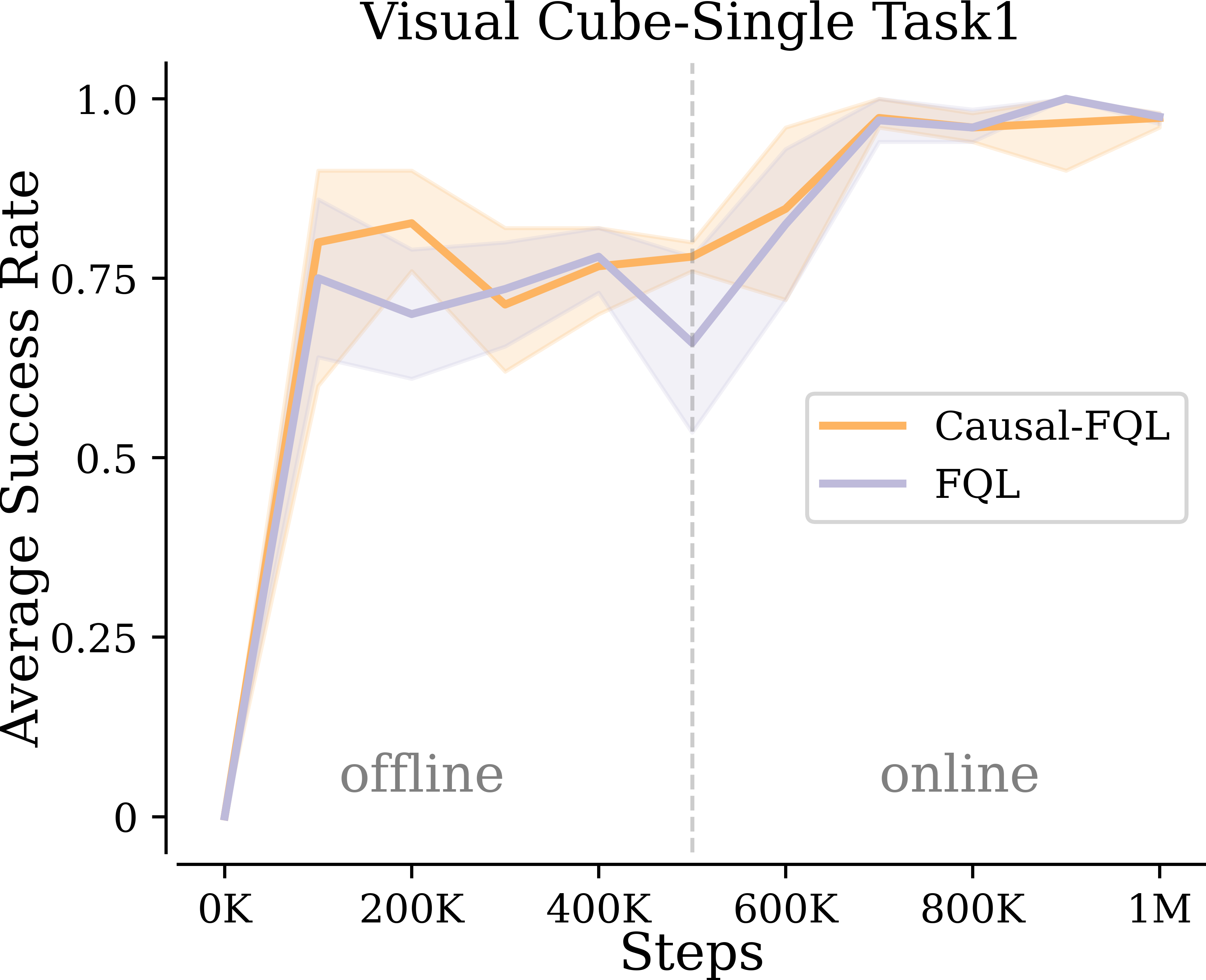}
		\label{fig:_5_2_single}
	\end{subfigure}
    \hfill
    \begin{subfigure}{0.33\linewidth}\centering
		\includegraphics[width=\linewidth]{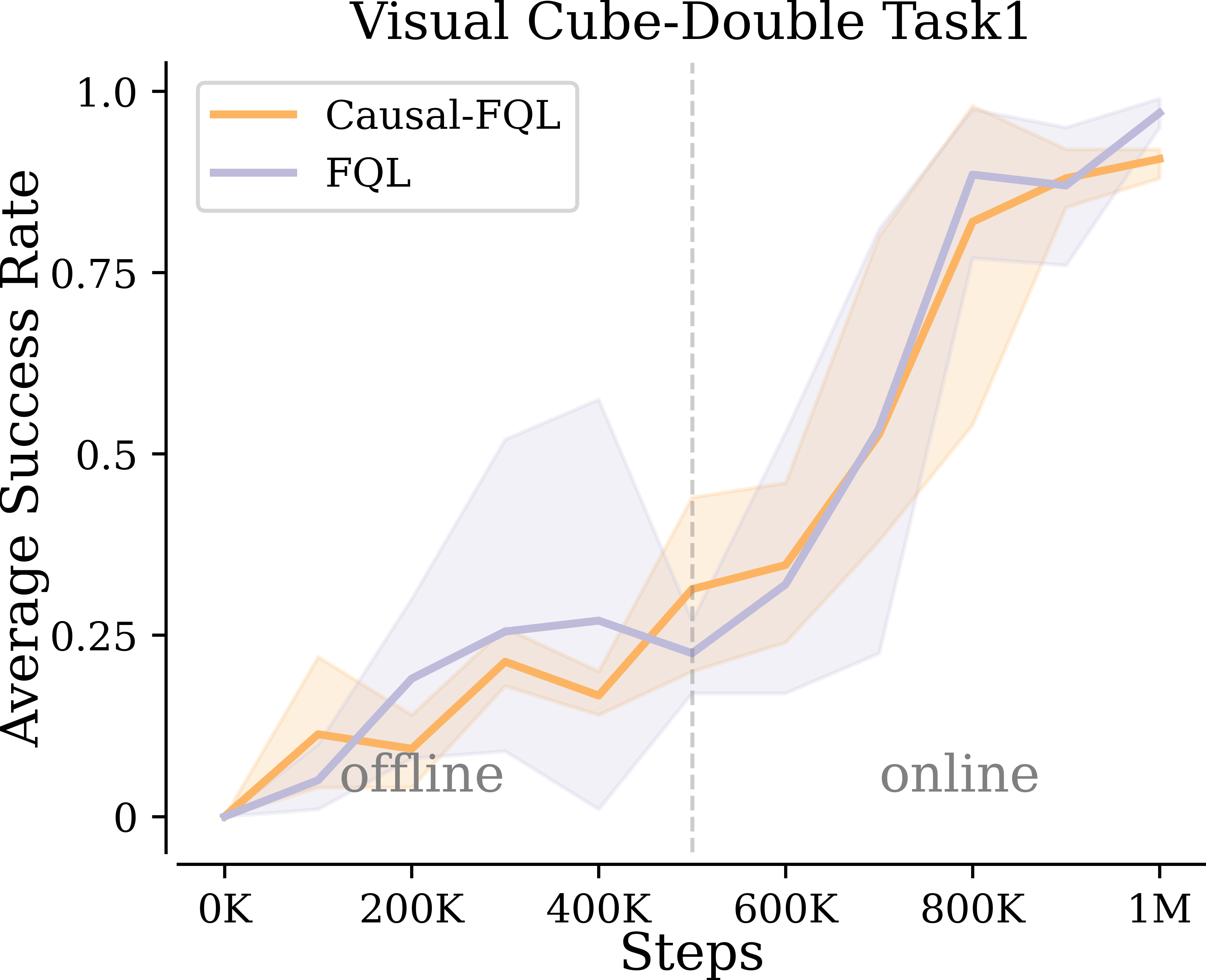}
		\label{fig:_5_2_double}
	\end{subfigure}
    \hfill\null
	\caption{\textbf{Offline-to-online evaluation.} Causal-FQL converges to near optimal success rate in three representative tasks selected from task types that are not solved near-optimal in offline evaluation. All results are averaged over 4 seeds.}
    \label{fig:_5_2_offon}
\end{figure*}
From aggregated results in \Cref{tab:aggregated_results}, \texttt{Causal-FQL} (CFQL) achieves the best or near-best performance in all task groups, surpassing the causal-unaware vanilla FQL by over 20\%. Notably, CFQL is able to match the prior state-of-the-art Gaussian baseline (ReBRAC) performance in \texttt{cube-single} in which vanilla FQL falls far behind. \Cref{tab:full offline} shows the full offline result in individual task of each group. CFQL is able to obtain the best or near-best performance in 19 out of 25 tasks. Especially in visual-cube-double-play-singletask-task1, CFQL nearly doubles the performance of vanilla FQL and in visual-puzzle-3x3-play-singletask-task5, our method is the only one that obtains non-near-zero success rate. We find that CFQL performs at least as well as FQL, empirically verifying that the confounding robust objective function is an informative lower bound on optimal Q-values. There are also tasks in which both FQL and CFQL don't perform well. We hypothesize that the biggest challenges pertain in those tasks may not be the confounding biases but rather on other dimensions of offline RL, like representation learning or simply due to limited state-action space coverage of the offline data.

\subsection{Offline-to-Online Performance}
We also study the offline to online finetuning performance of Causal-FQL. Overall, Causal-FQL achieves strong fine-tuning performance over vanilla FQL. 
It brings 8\% improvement on average over the prior SOTA, FQL's fine-tuning performance, suggesting wide applicability of Causal-FQL in both offline and offline-to-online tasks. On \texttt{visual-puzzle-4x4}, Causal-FQL is also the first algorithm in the literature to fully solve task1. \Cref{fig:_5_2_offon} shows the full evaluation curves during training.

A notable difference between Causal-FQL and prior offline RL work is that we find it necessary to use a different objective during the online phase. Recall that in \cref{fig:implicit bias}, confounding biases exist in the offline dataset because the data collection policy (expert) is operating in the structured state observation space while the learner's policy is operating in the pixel observation space. The actions from the expert depend on information that may not be fully observable in the pixel space, giving rise to the confounding biases. But when it comes to online fine-tuning, the data collection policy is the learner's policy itself and there is no observation space mismatch. Thus, we should switch back to the vanilla FQL objective function during online fine-tuning in principle. 

However, in implementation, the standard practice is to gradually replace the offline trajectories in the replay buffer with online experiences so that the training doesn't suffer from a sudden distribution shift. As a result, the actual data provided to the agent during online phase is a mixture of confounded and confounding free data. The objective function selection is subject to tuning for specific tasks. In \texttt{visual-cube-single} and \texttt{visual-cube-double}, we find it better to fully switch back to FQL objective during online phase while in \texttt{visual-puzzle-4x4}, we find it better to use balanced sampling from both online and offline replay buffer, and use the corresponding Causal-FQL/FQL objective for each half of the batch. We believe a finer control of the objective used during online fine-tuning for Causal-FQL could bring further performance and sample efficiency improvement. 

\subsection{Hyperparameter Tuning}
\begin{figure}[t]
    \centering
    \hfill
    \begin{subfigure}{0.49\linewidth}\centering
        \includegraphics[width=\linewidth]{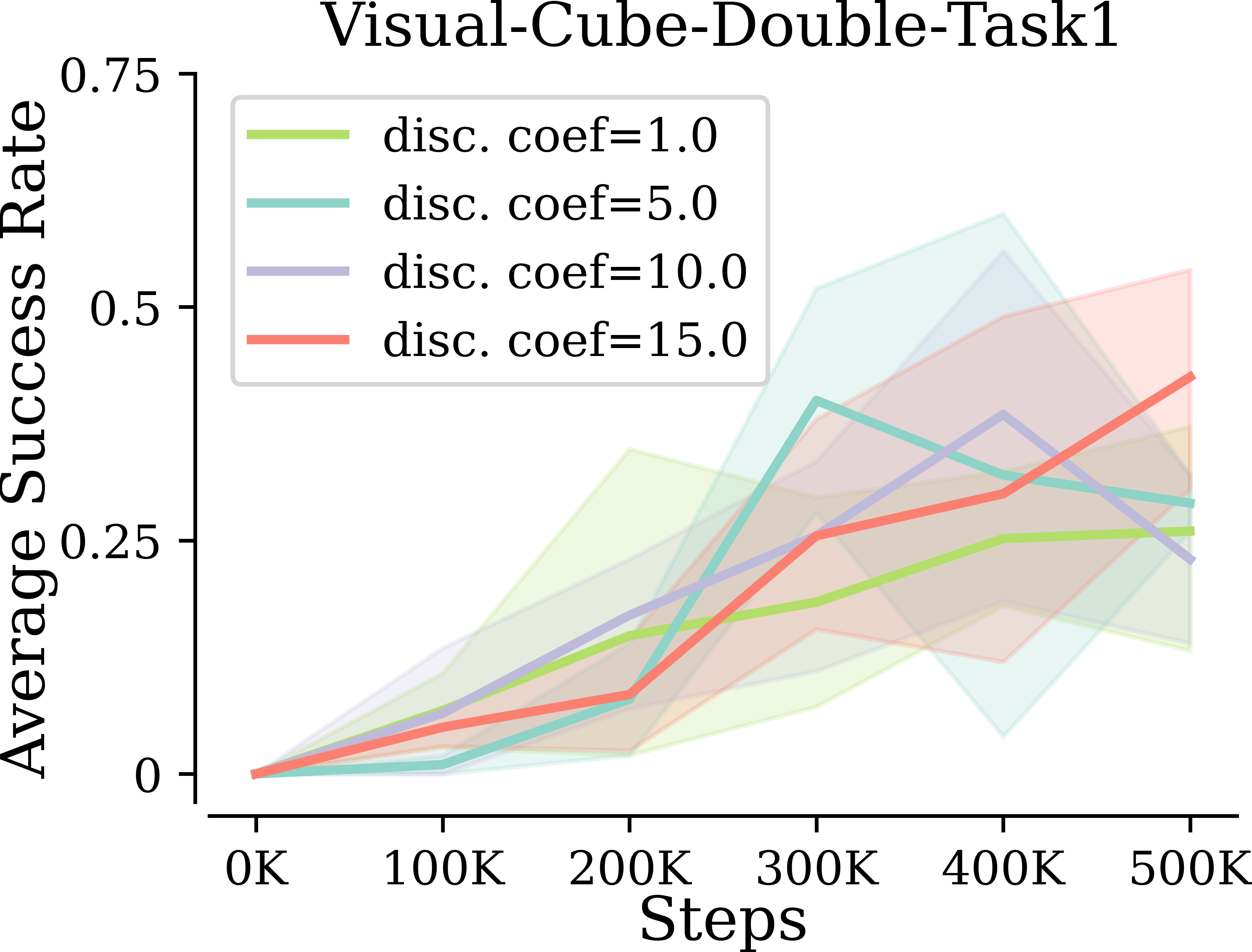}
        \label{fig:disc coef}
    \end{subfigure}
    \hfill
    \begin{subfigure}{0.49\linewidth}\centering
        \includegraphics[width=\linewidth]{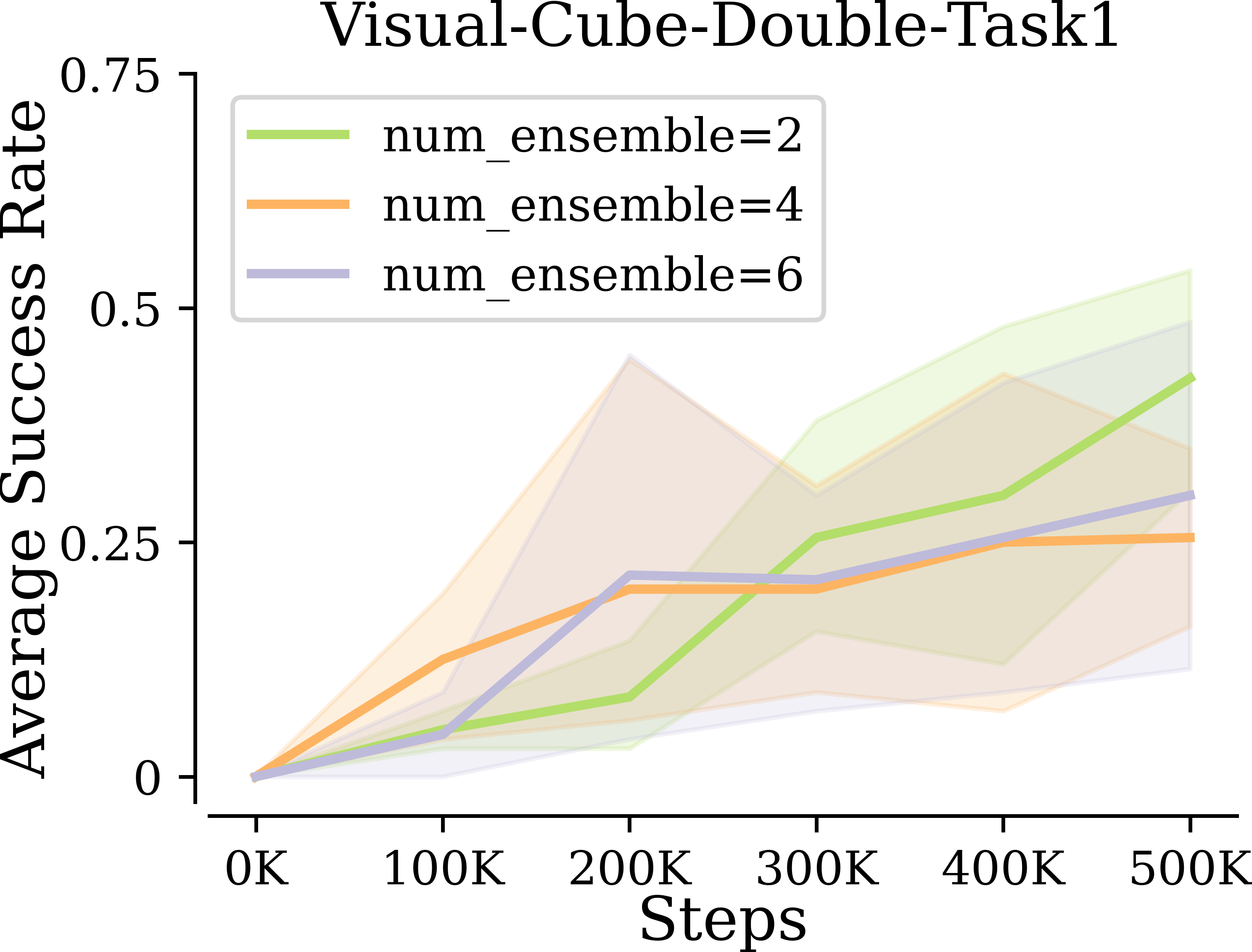}
        \label{fig:num ensemble}
    \end{subfigure}
    \hfill\null
    \caption{\textbf{A suitably tuned discriminator loss coefficient and number of critic ensembles boosts the performance.}}
    \label{fig:hyper param}
\end{figure}
In this section, we study how to tune the two fundamental components of the confounding robust Q-value objective (\Cref{eq:approx causal q obj}), the discriminator and the Q-value ensembles. We mainly tuned the hyperparameters in task 1 of \texttt{visual-cube-double} for its moderate difficulty. For each task group, we further tune the initial discriminator loss coefficient for task 1. Here we present the major findings. Detailed hyperparameters are presented in \Cref{app:exp detail}.

\paragraph{Discriminator loss coefficient.} 
We sweep the discriminator loss coefficient over $\{1,5,10,15\}$. \Cref{fig:disc coef} shows the full success rate curve over different coefficients. We notice that for overly small coefficients, the method is close to the original FQL and usually results in unstable convergence towards the end of the offline training session. Similar patterns are observed in other environments as well.

\paragraph{Number of the Q-network Ensembles.} We vary the number of Q-value ensembles, using the values $\{2, 4, 6\}$, while keeping all other parameters set to the optimal values identified in previous tuning. Our findings indicate that, in general, fewer ensembles yield better results. This may be due to the tendency to take the minimum from a larger number of ensembles, which can lead to over-pessimism in the results.


\section{Conclusion} 
We study the presence of implicit confounding biases in pixel-based offline reinforcement learning tasks and introduce Causal-FQL, a novel flow-based offline agent that is robust to such confounding factors. Theoretically, we establish a novel objective for the causal policy gradient that optimizes the target policy's performance in the worst-case environment. In practice, we propose a straightforward yet high-performance approximation to the confounding-robust offline RL objective, utilizing deep ensembles of discriminators and critics. Extensive experiments show that Causal-FQL outperforms prior state-of-the-art methods in both offline and challenging visual control tasks that transition from offline to online settings. Causal-FQL is the first method to fully solve certain tasks in the OGBench visual benchmark. Finally, we emphasize the significance of principled causal analysis in RL problems; although observation-space mismatches may appear benign in prior offline RL literature, we demonstrate that they pose a major obstacle for visual tasks. Future work will focus on exploring alternative implementations of the causal policy gradient, using different critic networks and more expressive policy classes.

\clearpage
\section*{Impact Statement}
This paper investigates the theoretical and algorithmic framework for robust policy learning from an expressive policy class using confounded offline data obtained by passively observing an expert demonstrator. Implicit confounding bias arises when the input variables that the expert uses to determine the action values are unknown and differ from the observed state used by the learner. \Cref{exp:_bias} illustrates a situation where the expert has access to the underlying structured states, while the learner is trying to learn from video recordings of the expert's natural trajectories. 

Our framework can be applied in various fields, including autonomous vehicle development, industrial robotics, and chronic disease management, among others. Especially for safety critical domains, our method mitigates the potential risks associated with offline reinforcement learning (RL) training from demonstrations that involve unobserved confounding.
We believe building trusted AI decision making systems are becoming increasingly crucial as the prevalence of black-box AI systems grows while our understanding of their long-term societal implications remains limited.
\bibliography{bibwithID,extra}
\bibliographystyle{icml2026}

\clearpage
\appendix
\onecolumn
\section{Related Work} \label{app:related}
\paragraph{Off-policy Causal Reinforcement Learning.}
When the no unobserved confounding assumption does not hold, one would need to either identify the reward and transition distributions before evaluating policy values or bound the possible policy values. 
There is a rich line of literature in identifying policy values directly from confounded data~\citep{DBLP:conf/icml/ShiUHJ22,DBLP:conf/nips/MiaoQZ22,DBLP:conf/icml/GuoCZYW22,DBLP:journals/ior/BennettK24}. But they usually invoke other critical learning assumptions such as the existence of bridge functions in the line of proximal causal inference literature \citep{tchetgen2020proximalcausalinference}. On the other hand, without further assumptions, one can utilize the bounding method to account for the whole range of possible policy values. Seminal work of Manski \citep{Manski1989NonparametricBO} developed the first bounds on causal effects in non-identifiable settings using observational data in the single-stage treatment model with contextual information (i.e., a contextual bandit model). These bounds were then expanded to the instrumental variable setting \citep{balke:pea97,imbens1994identification}, to partially identify counterfactual probabilities of causation \citep{DBLP:journals/amai/TianP00probcausationbound}, to construct reward shaping functions automatically \citep{li2025confoundedshaping}. This work is inspired by a recent work in partially identifying the policy values via bounding~\citep{zhang2025eligibility} and confounding robust off-policy learning in discrete action space~\citep{li2025confoundingdqn}.

\paragraph{Offline Reinforcement Learning.} Offline reinforcement learning~\citep{levine2020offline} concerns the problem of learning optimal policies from a static dataset collected by other behavioral policies while maintaining close affinity to the behavioral state-action distributions in the dataset. Several different strategies have been studied in the literature to enforce this constraint, including but not limited to conservatism~\citep{CQL2020Kumar,calql}, generative modeling~\citep{DBLP:conf/nips/ChenLRLGLASM21,DBLP:conf/nips/JannerLL21}, OOD detection~\citep{garg2023extremeqlearningmaxentrl}, and behavioral regularization~\citep{fujimoto2021a,tarasov2023rebrac,park2025fql,dong2025valueflows,agrawalla2025floq}. After offline training, an online finetuning phase can be added to further improve the agent's performance. To avoid sudden distribution shift cased unlearning~\citep{calql}, one could use balanced sampling~\citep{hybridrl,10.5555/3666122.3666329hybridpolicyopt}, maintaining offline data in the reply buffer~\citep{park2025fql,nair2021awacacceleratingonlinereinforcement,efficientoffon} or penalize Q-values~\citep{calql}.

\paragraph{Flow Matching in RL.} Due to the powerful multi-modal distribution modeling capability of flow matching and relatively easy implementation compared against solving PDEs as in diffusion based methods, flow matching~\citep{lipman2022flow,liu2022rectifiedflow,tong2024conditionalflow} has gained great attention in the RL community recently. Its applications in reinforcement learning spans from planning~\citep{nandiraju2025hdflow,nguyen2025flowmplearningmotionfields}, learning world models~\citep{liu2025towardsfoundationallidar,rohbeck2025modelingcomplexflow,lillemark2026flowequivariantworldmodels}, online learning~\citep{lv2025flowbasedonlinerl}, offline learning~\citep{park2025fql,dong2025valueflows,agrawalla2025floq,tiofack2025guidedflowpolicylearning,zhang2026sacflowsampleefficientreinforcement,ghugare2025normalizingflowscapablemodels} to multi-modal vision-language-action models (VLAs)~\citep{black2026pi0visionlanguageactionflowmodelpi0,intelligence2025pi05visionlanguageactionmodelopenworld,deng2025graspvla,jiang2025AsyncVLA}.

\section{Limitations} \label{app:limitation}
There are mainly two types of limitations of the proposed Causal-FQL. The first type of limitations pertain to FQL~\citep{park2025fql} that 1) it requires numerically solving ODEs during training resulting in potentially slow training; 2) FQL only deploys the simplest form of flow matching. Combining with recent advancements like rectified flows~\citep{liu2022rectifiedflow} and optimal transport conditional flow matching~\citep{tong2024conditionalflow} should bring faster convergence, stabler training and better performance; 3) FQL does not have a built in exploration mechanism, which could limit its online fine-tuning performance. The second type of limitations are from approximating the causal offline RL objective. As we introduce more components to the pipeline of FQL, the combinatorial space of tunable hyper-parameters is even harder to tune than before. Given the computational heavy nature of pixel-based offline RL tasks, a simpler confounding robust offline RL method is our next step before larger scale applications.

\section{Extended Preliminaries and Proof} \label{app:proof}
\paragraph{Causal Foundations.}
We briefly recap on the foundations of causal inference to facilitate our discussion. Our definition of CMDP is closely tied to the definition of Structural Causal Models \citep{pearl2009causality}. 
\begin{definition}
An SCM $\1M$ is a tuple $\tuple{\*U, \*V, \2F, P}$ where
\begin{enumerate}[label=\textbullet, leftmargin=23pt, topsep=0pt, parsep=0pt, itemsep=1pt]
    \item $\*U$ is a set of exogenous variables;
    \item $\*V$ is a set of endogenous variables;
    \item $\2F$ is a set of functions s.t. each $f_V \in \2F$;
    \item the exogenous distribution $P(\*U)$. 
\end{enumerate}
$f_V$ decides values of an endogenous variable $V \in \*V$ taking as argument a combination of other variables in the system. That is, $V \leftarrow f_{V}(\*\PA_V, \*U_V), \*\PA_V \subseteq \*V, \*U_V \subseteq \*U$. Values of exogenous variables $\*U$ are drawn from $P(\*U)$.
\end{definition}
A policy $\pi$ over a subset of variables $\*X \subseteq \*V$ is a sequence of decision rules $\left\{ \pi(X | \*S_X) \right\}_{X \in \*X}$, where every $\pi(X |\*S_X)$ is a probability distribution mapping from domains of a set of covariates $\*S_X \subseteq \*V$ to the domain of action $X$. An intervention following a policy $\pi$ over variables $\*X$, denoted by $\doo(\pi)$, is an operation which sets values of every $X \in \*X$ to be decided by policy $X \sim \pi(X | \*S_X)$ \citep{sigmacalculus}, replacing the functions $f_{\*X} = \{f_{X}: \forall X \in \*X\}$ that would normally determine their values. For an SCM $\1M$, let $\1M_{\pi}$ be a submodel of $M$ induced by intervention $\doo(\pi)$. For a set $\*Y \subseteq \*V$, the interventional distribution $\inv{\*Y}{\pi}$ is defined as the distribution over $\*Y$ in the submodel $\1M_{\pi}$, i.e., $\Inv{\*Y}{\pi}{\1M}\triangleq P_{\1M_{\pi}}\Parens{\*Y}$; restriction $\M$ is left implicit when it is obvious.

Each SCM $\1M$ is also associated with a causal diagram $\G$ (e.g., \Cref{fig:_2_1_mdp}), which is a directed acyclic graph (DAG) where nodes represent endogenous variables $\*V$ and arrows represent the arguments $\*\PA_V, \*U_V$ of each structural function $f_{V} \in \2F$. Exogenous variables $\*U$ are often not explicitly shown by convention. However, a bi-directed arrow $V_i \leftrightarrow V_j$ indicates the presence of an unobserved confounder (UC), $U_{i, j} \in \*U$ affecting $V_i, V_j$, simultaneously~\citep{PCH}. We will use standard graph-theoretic family abbreviations to represent graphical relationships, such as parents ($\pa$), children ($\ch$), descendants ($\de$), and ancestors ($\an$). For example, the set of parent nodes of $\*X$ in $\1G$ is denoted by $\pa(\*X)_{\1G} = \cup_{X \in \*X} \pa(X)_{\1G}$. Capitalized versions $\Pa, \Ch, \De, \An$ include the argument as well, e.g., $\Pa(\*X)_{\1G} = \pa(\*X)_{\1G} \cup \*X$. 
A path from a node $X$ to a node $Y$ in $\G$ is a sequence of edges that does not include a particular node more than once. Two sets of nodes $\*X, \*Y$ are said to be d-separated by a third set $\*Z$ in a DAG $\G$, denoted by $(\*X \ci \*Y | \*Z)_{\G}$, if every edge path from nodes in $\*X$ to nodes in $\*Y$ is ``blocked'' by nodes in $\*Z$. The criterion of blockage follows \citet[Def.~1.2.3]{pearl2009causality}. For more details on SCMs, we refer readers to \citet{pearl2009causality,PCH}.

\paragraph{From MDP to CMDP.} In MDP, all the state variables are observable to the agent. When transitioning from the demonstrator's perspective to the learner's perspective, this assumption is implicitly assumed apriori. However, as we have demonstrated in the main text, this may not be usually the case. And we argue that the same problem also lurks in other aspects of reinforcement learning~\citep{crlsurvey}, like and off-policy learning~\citep{10.1609safepartialid,zhang2019near,li2025confoundingdqn}, curriculum learning~\citep{curriculum_survey,li2024causally}, and reward shaping~\citep{li2025confoundedshaping}. This motivates our proposal of CMDP as a proper modeling of the confounding biases while maintaining the Markov property for efficient policy learning. 

Compared to MDP graphically, CMDP only adds three bi-directed arrows within each time step. From the definition of causal diagram we discussed above, this indicates that the three variables, current time step action $X_t$, reward $Y_t$ and next state $S_{t+1}$, may be affected by the same latent confounding variables. In the pixel-based offline RL tasks, those variables could be occluded joint angles, exact angular velocities of the end effector or the precise location of an object, all kind of information that the demonstrator's policy relies on but not fully observable to the learner via pixel observations.

\paragraph{Proof of Theorem 3.1}
We are now ready to provide the detailed proof for the causal policy gradient objective in \Cref{thm:_3_1}.
\begin{theorem}[Restatement of \cref{thm:_3_1}]
	For a CMDP environment $\M$ with reward signals $Y_t \in [a, b] \subseteq \3R$, fix a policy $\pi$. The state value function $V_{\pi}(s) \geq \underline{V_{\pi}}(s)$ for any state $s \in \1S$, where the lower bound $\underline{V_{\pi}}(s)$ is given by as follows,
	\begin{align}
		&\underline{V_{\pi}}(s) = \3E_{\substack{x \sim p(\cdot \mid s)\\ x', x^*\sim \pi(\cdot \mid s)}} \bigg[\I_{x \neq x'} \Big (a + \gamma \min_{s^*} \underline{Q_{\pi}}(s^*, x^*) \Big) + \I_{x = x'} \Big (\widetilde{\1R}\Parens{s, x} + \gamma \sum_{s'} \widetilde{\1T}\Parens{s, x, s'}\underline{Q_{\pi}}(s', x^*)\Big) \bigg]
	\end{align}
\end{theorem}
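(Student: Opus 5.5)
The plan is to derive the bound from the causal Bellman lower bound \Cref{eq:_3_lower_q_2} of \citet{zhang2025eligibility}, taking as given that $Q_\pi(s,x) \geq \underline{Q_\pi}(s,x)$ for all $(s,x)$. The closed form then comes from averaging this inequality over $x \sim \pi(\cdot\mid s)$ and rewriting every probability weight as an expectation over independent draws. The first step is $V_\pi(s) = \sum_{x'} \pi(x'\mid s) Q_\pi(s,x') \geq \sum_{x'}\pi(x'\mid s)\underline{Q_\pi}(s,x') = \underline{V_\pi}(s)$, using nonnegativity of $\pi$; this gives the inequality claimed in \Cref{thm:_3_1}. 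What remains is to show that $\sum_{x'}\pi(x'\mid s)\underline{Q_\pi}(s,x')$ equals the expression in \Cref{eq:_lower_v}.

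Next I substitute \Cref{eq:_3_lower_q_2} with the target action named $x'$. The weight $\mu(x'\mid s)$ is the nominal behavioral probability $P(X_t = x' \mid S_t = s)$, so I write it as $\3E_{x\sim p(\cdot\mid s)}[\I_{x = x'}]$, and similarly $\mu(\neg x'\mid s) = \3E_{x\sim p(\cdot\mid s)}[\I_{x\neq x'}]$. On the event $x = x'$ the nominal quantities $\widetilde{\1R}(s,x')$ and $\widetilde{\1T}(s,x',\cdot)$ can be replaced by $\widetilde{\1R}(s,x)$ and $\widetilde{\1T}(s,x,\cdot)$. This matches the indicator form of \Cref{eq:_lower_v}, with the outer expectation taken over $x\sim p(\cdot\mid s)$ and $x'\sim\pi(\cdot\mid s)$ drawn independently.

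Finally I rewrite the next-step state values as action values. On the matching branch, $\underline{V_\pi}(s') = \sum_{x^*}\pi(x^*\mid s')\underline{Q_\pi}(s',x^*) = \3E_{x^*\sim\pi(\cdot\mid s')}[\underline{Q_\pi}(s',x^*)]$. By linearity and Fubini (rewards are bounded in $[a,b]$ and $\gamma\le 1$, so all quantities are bounded), this expectation can be pulled outside the sum over $s'$, giving $\sum_{s'}\widetilde{\1T}(s,x,s')\underline{Q_\pi}(s',x^*)$ inside a joint expectation. On the mismatched branch I likewise write $\min_{s^*}\underline{V_\pi}(s^*) = \min_{s^*}\3E_{x^*}[\underline{Q_\pi}(s^*,x^*)]$.

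I expect two obstacles. The first is bookkeeping: the statement draws $x^*\sim\pi(\cdot\mid s)$, conditioned on the current state, while the derivation produces $x^*\sim\pi(\cdot\mid s')$. I would present $x^*$ as an overloaded next-step draw from $\pi$ at whichever next state is evaluated, as the paper itself remarks after \Cref{eq:_cpg}.

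The second obstacle is the worst-case branch, where the minimum and the expectation do not commute. Here $\min_{s^*}\3E_{x^*}[\underline{Q_\pi}(s^*,x^*)] \geq \3E_{x^*}[\min_{s^*}\underline{Q_\pi}(s^*,x^*)]$. This direction is fine for a valid lower bound: placing the expectation outside the minimum makes the right-hand side of \Cref{eq:_lower_v} no larger than $\sum_{x'}\pi(x'\mid s)\underline{Q_\pi}(s,x')$, so the inequality $V_\pi(s)\ge\underline{V_\pi}(s)$ still holds. The cost is that I obtain $\ge$ rather than exact equality with the recursive definition. I will state this explicitly, so the conclusion is a chain of inequalities ending in \Cref{eq:_lower_v}. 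The continuous-action case is handled by the same manipulations with sums replaced by integrals, reading $\I_{x = x'}$ as the paper's nominal agreement event.
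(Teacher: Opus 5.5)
Your route is the paper's. Both average \Cref{eq:_3_lower_q_2} against $\pi(\cdot\mid s)$, write $\mu(x'\mid s)$ and $\mu(\neg x'\mid s)$ via $\I_{x=x'}$ and $\I_{x\neq x'}$ with $x\sim p(\cdot\mid s)$, and read the double sum weighted by $\mu(x\mid s)\pi(x'\mid s)$ as an expectation. The paper first makes the symmetric rewrite $\sum_x\pi(x\mid s)\mu(\neg x\mid s)=\sum_x\mu(x\mid s)\pi(\neg x\mid s)$; your direct substitution makes that unnecessary. Its last sentence then simply asserts equality with \Cref{eq:_lower_v}, silently turning $\underline{V_\pi}$ into $\underline{Q_\pi}(\cdot,x^*)$. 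You are right that this conversion is where the care is needed.

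Your treatment of that conversion does not go through as written, because you resolve your two obstacles inconsistently. The exchange $\min_{s^*}\mathbb{E}_{x^*}[\underline{Q_\pi}(s^*,x^*)]\ge\mathbb{E}_{x^*}[\min_{s^*}\underline{Q_\pi}(s^*,x^*)]$ needs the law of $x^*$ not to depend on $s^*$. But to get $\underline{V_\pi}(s^*)=\mathbb{E}_{x^*}[\underline{Q_\pi}(s^*,x^*)]$ you took $x^*\sim\pi(\cdot\mid s^*)$, and then the right-hand side is not even well defined. If you instead fix the law, e.g.\ $x^*\sim\pi(\cdot\mid s)$ as literally written, the matching branch stops being an identity. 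The needed min-branch inequality can also fail. Take $\pi(\cdot\mid s)=\delta_{x_1}$ with $\mu(x_1\mid s^*)>0$ and $\widetilde{\1R}(s^*,x_1)>a$ for all $s^*$. Let the policy play, at some $s_0$, an action $x_2$ with $\mu(x_2\mid s_0)=0$. Then $\min_{s^*}\underline{V_\pi}(s^*)\le\underline{V_\pi}(s_0)=a+\gamma\min_{s''}\underline{V_\pi}(s'')<\min_{s^*}\underline{Q_\pi}(s^*,x_1)$.

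The missing idea is to couple next-step draws through common noise, exactly as the implementation does with $\pi_\theta(s,z)$. Write $x^*=g(s^*,z)$ with $z$ from a fixed law (always possible, e.g.\ by inverse-CDF sampling). Then $\underline{V_\pi}(s^*)=\mathbb{E}_z[\underline{Q_\pi}(s^*,g(s^*,z))]$ for every $s^*$, and the matching branch is an exact identity by Fubini. The swap $\min_{s^*}\mathbb{E}_z[\cdot]\ge\mathbb{E}_z[\min_{s^*}\cdot]$ is now legitimate. This gives your intended chain $V_\pi(s)\ge\sum_{x'}\pi(x'\mid s)\underline{Q_\pi}(s,x')\ge$ the right-hand side of \Cref{eq:_lower_v}. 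Alternatively, read the worst-case branch as the paper implicitly does, namely $\min_{s^*}\underline{V_\pi}(s^*)$ with the $x^*$-average inside the minimum. Then both branches hold with equality and no exchange is needed.

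Finally, drop the closing remark about continuous actions. With action densities, $\mathbb{E}_{x\sim p,\,x'\sim\pi}[\I_{x=x'}]=0$ and $\mu(x\mid s)=0$, so the bound collapses to its worst-case branch. Like the paper's, your argument is a discrete-action (or atomic) one. The discriminator in the algorithm is a heuristic surrogate for $\I_{x=x'}$, not something the proof covers.
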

\begin{proof}
Since $\underline{V_{\pi}}(s) = \sum_{x} \pi(x \mid s) \underline{Q_{\pi}}(s, x)$, it follows from \Cref{eq:_3_lower_q_2} that the state value function is lower bounded by
\begin{align}
    \underline{V_{\pi}}(s) &= \sum_{x} \pi(x \mid s) \mu(\neg x \mid s) \bigg (a + \gamma \min_{s'} \underline{V_{\pi}}(s') \bigg ) \\
    &+  \sum_{x} \pi(x \mid s)\mu(x\mid s)\bigg (\widetilde{\1R}\Parens{s, x} + \gamma \sum_{s', x'} \widetilde{\1T}\Parens{s, x, s'} \underline{V_{\pi}}(s') \bigg )
\end{align}
Following the simplification step used in \citep[Theorem 1]{zhang2024eligibility}, 
\begin{align}
    \underline{V_{\pi}}(s) &= \sum_{x} \mu(x \mid s) \pi(\neg x \mid s)\bigg (a + \gamma \min_{s'} \underline{V_{\pi}}(s') \bigg ) \\
    &+  \sum_{x} \mu(x\mid s)\pi(x \mid s)\bigg (\widetilde{\1R}\Parens{s, x} + \gamma \sum_{s', x'} \widetilde{\1T}\Parens{s, x, s'} \underline{V_{\pi}}(s') \bigg )
\end{align}
The above equation could be further written as
\begin{align}
    \underline{V_{\pi}}(s) &= \sum_{x, x'} \mu(x \mid s) \pi(x' \mid s) \I_{x \neq x'} \bigg (a + \gamma \min_{s'} \underline{V_{\pi}}(s') \bigg ) \\
    &+  \sum_{x, x'} \mu(x\mid s)\pi(x' \mid s) \I_{x = x'} \bigg (\widetilde{\1R}\Parens{s, x} + \gamma \sum_{s', x'} \widetilde{\1T}\Parens{s, x, s'} \underline{V_{\pi}}(s') \bigg )
\end{align}
Finally, replacing the summation over the policy product $\mu(x\mid s)\pi(x' \mid s)$ with the expectation proves the statement.
\end{proof}

\section{Additional Results} \label{app:extra exp}
Other than manipulation tasks, we also test Causal-FQL in visual locomotion tasks like \texttt{visual-antmaze} series. The results are not as amazing as those on manipulation tasks, which are as expected. We hypothesize that this is because FQL and Causal-FQL sample actions from the policy directly without considering the noisy distribution of Q-values while Gaussian policy extraction with rejection sampling handles uni-modal reward distributions better. Thus, prior Gaussian SOTA like BRAC performs well in locomotion tasks. As an indirect proof to our hypothesis, in \cref{tab:locomotion offline}, ReBRAC \citep{tarasov2023rebrac} also outperforms its own flow-based counterpart, FBRAC \citep{park2025fql}, and IFQL (FQL with rejection sampling) also outperforms FQL.

\begin{table*}[ht]
\centering
\label{tab:locomotion offline}
\scriptsize 
\setlength{\tabcolsep}{3pt} 
\renewcommand{\arraystretch}{1.1} 
\newcommand{\tb}[1]{\mathbf{#1}} 
\caption{\textbf{Locomotion offline evaluation results.} Both \texttt{Causal-FQL} and vanilla FQL are not on par with Gaussian based methods. We report results averaged over 4 seeds and bold values within 95\% of the best performance of each domain following prior work \citep{park2025fql}.}
\resizebox{\textwidth}{!}{%
\begin{tabular}{l cc ccccc}
\toprule
& \multicolumn{2}{c}{Gaussian Policies} & \multicolumn{5}{c}{Flow Policies} \\
\cmidrule(lr){2-3} \cmidrule(lr){4-8}
& IQL & ReBRAC & FBRAC & IQN & IFQL & FQL & \texttt{Causal-FQL} \\
\midrule

\ttfamily visual-antmaze-medium-navigate-singletask-task1-v0 (*) & $\tb{78 \pm 9}$ & $54 \pm 15$ & $27 \pm 3$ & $62 \pm 7$ & $\tb{81 \pm 3}$ & $32 \pm 3$ & ${40 \pm 15}$ \\
\ttfamily visual-antmaze-medium-navigate-singletask-task2-v0 & $90 \pm 3$ & $\tb{96 \pm 1}$ & $42 \pm 4$ & $88 \pm 2$ & $87 \pm 1$ & $60 \pm 2$ & ${61 \pm 2}$\\
\ttfamily visual-antmaze-medium-navigate-singletask-task3-v0 & $80 \pm 6$ & $\tb{97 \pm 1}$ & $32 \pm 4$ & $64 \pm 5$ & $92 \pm 1$ & $35 \pm 8$ & ${45 \pm 3}$ \\
\ttfamily visual-antmaze-medium-navigate-singletask-task4-v0 & $\tb{89 \pm 4}$ & $\tb{93 \pm 2}$ & $23 \pm 2$ & $71 \pm 6$ & $84 \pm 3$ & $35 \pm 2$ & ${39 \pm 2}$\\
\ttfamily visual-antmaze-medium-navigate-singletask-task5-v0 & $84 \pm 2$ & $\tb{97 \pm 1}$ & $25 \pm 4$ & $84 \pm 1$ & $89 \pm 2$ & $29 \pm 7$ & ${29 \pm 6}$\\
\midrule

\ttfamily visual-antmaze-teleport-navigate-singletask-task1-v0 (*) & $5 \pm 2$ & $2 \pm 0$ & $1 \pm 1$ & $2 \pm 1$ & $\tb{7 \pm 4}$ & $2 \pm 1$ & ${4 \pm 2}$ \\
\ttfamily visual-antmaze-teleport-navigate-singletask-task2-v0 & $10 \pm 2$ & $10 \pm 3$ & $6 \pm 5$ & $7 \pm 3$ & $\tb{13 \pm 3}$ & $6 \pm 1$ & ${7 \pm 4}$\\
\ttfamily visual-antmaze-teleport-navigate-singletask-task3-v0 & $7 \pm 7$ & $4 \pm 1$ & $10 \pm 4$ & $6 \pm 4$ & $8 \pm 9$ & $9 \pm 4$ & $\tb{14 \pm 5}$\\
\ttfamily visual-antmaze-teleport-navigate-singletask-task4-v0 & $4 \pm 6$ & $4 \pm 0$ & $10 \pm 2$ & $4 \pm 2$ & $\tb{18 \pm 2}$ & $9 \pm 1$ & ${11 \pm 3}$\\
\ttfamily visual-antmaze-teleport-navigate-singletask-task5-v0 & $2 \pm 1$ & $2 \pm 1$ & $2 \pm 1$ & $2 \pm 1$ & $\tb{4 \pm 2}$ & $1 \pm 1$ & $\tb{4 \pm 2}$\\
\bottomrule
\end{tabular}
}
\end{table*}

\section{Implementation Details} \label{app:implement detail}
\texttt{Causal-FQL} is built upon the codebase of FQL~\citep{park2025fql}\footnote{\textcolor{betterblue}{https://github.com/seohongpark/fql}}. \texttt{Causal-FQL} inherits most of the architectural design of FQL with minimum changes implemented for the causal offline RL objective. First, we add an action discriminator and a confounding‑robust Q weighting scheme: it builds an extra action discriminator network (the same size as the Q-critic), uses it in actor loss to classify flow actions vs one-step target flow actions, computes factual weights from discriminator logits, and mixes ensemble Q-values as \cref{eq:approx causal q obj}; it also adds a discriminator loss term with optional exponential decay and a switch during online learning phase that disables discriminator weighting and trains with plain mean Q values over the ensembles. Due to limited computation resources, we don't have the capacity to fully tune the full hyperparameter space of \texttt{Causal-FQL} on visual tasks. But we do observe consistent performance improvement in all visual tasks over FQL using the default FQL hyperparameters.

\section{Experiment Details} \label{app:exp detail}
\paragraph{Tasks and Environments.} We use pixel based tasks from OGBench \citep{park2025ogbench} for evaluation. We use a total of 25 visual tasks across 5 environments for evaluation. Below is a list of the datasets we used, each of which defines 5 reward maximizing tasks indexed from \texttt{-task1} to \texttt{-task5}. In the main text, we evaluated manipulation tasks involving multi-step of subtasks with rewards bounded by the negative of number of subtasks and 0. In \cref{app:extra exp}, we evaluate locomotion tasks with rewards -1 or 0 indicating whether the goal is reached. 
\begin{enumerate}[label=\textbullet, leftmargin=23pt, topsep=0pt, parsep=0pt, itemsep=5pt]
    \item \texttt{visual-cube-single-play-singletask} %
    \item \texttt{visual-cube-double-play-singletask} %
    \item \texttt{visual-scene-play-singletask} %
    \item \texttt{visual-puzzle-3x3-play-singletask} %
    \item \texttt{visual-puzzle-4x4-play-singletask} 
\end{enumerate}

\paragraph{Methods and Hyperparameters.} We adopt the baseline implementation from FQL~\citep{park2025fql} and Value Flows~\citep{dong2025valueflows}.\footnote{\textcolor{betterblue}{https://github.com/chongyi-zheng/value-flows}}
Below we list the baseline methods we used. We take default hyper-parameters recommended in FQL and Value Flows code base. See the attached link in footnote for detailed command to replicate the results.
\begin{enumerate}[label=\textbullet, leftmargin=23pt, topsep=0pt, parsep=0pt, itemsep=5pt]
    \item \textbf{IQL} \citep{kostrikov2021iql}. Implicit Q-Learning use expectile regression to represent Q-values which is then used to select optimal actions. We use the default inverse temperature $\alpha$ recommended in the codebase.
    \item \textbf{ReBRAC} \citep{tarasov2023rebrac}. ReBRAC is a Gaussian policy based offline RL method that uses TD3 to learn values with behavioral regularizations. 
    \item \textbf{FBRAC} \citep{park2025fql}. A variant of BRAC with flow policies instead. It doesn't use the one-step flow policy extraction thus requiring backpropagation through time.
    \item \textbf{IQN} \citep{pmlr-v80-dabney18aiqn}. IQN is a distributional RL baseline that represents the Q-value distribution with quantile values. Action selection is based on taking argmax over sampled actions' Q-values.
    \item \textbf{IFQL} \citep{park2025fql}. A variant of Implicit Diffusion Q (IDQL, \cite{idql}) with flow poilcy and rejection sampling action selection, which is first proposed in \citep{park2025fql}.
    \item \textbf{FQL} \citep{park2025fql}. FQL uses one-step flow policy to maximize the Q-values learned by an ensemble of critics. It uses a BC flow policy as the behavioral regularization. 
\end{enumerate}

\section{Causal Value Flows} \label{app:value flow}

\begin{wrapfigure}{r}{0.33\textwidth}
    \vspace{-0.1in}
    \centering
    \includegraphics[width=\linewidth]{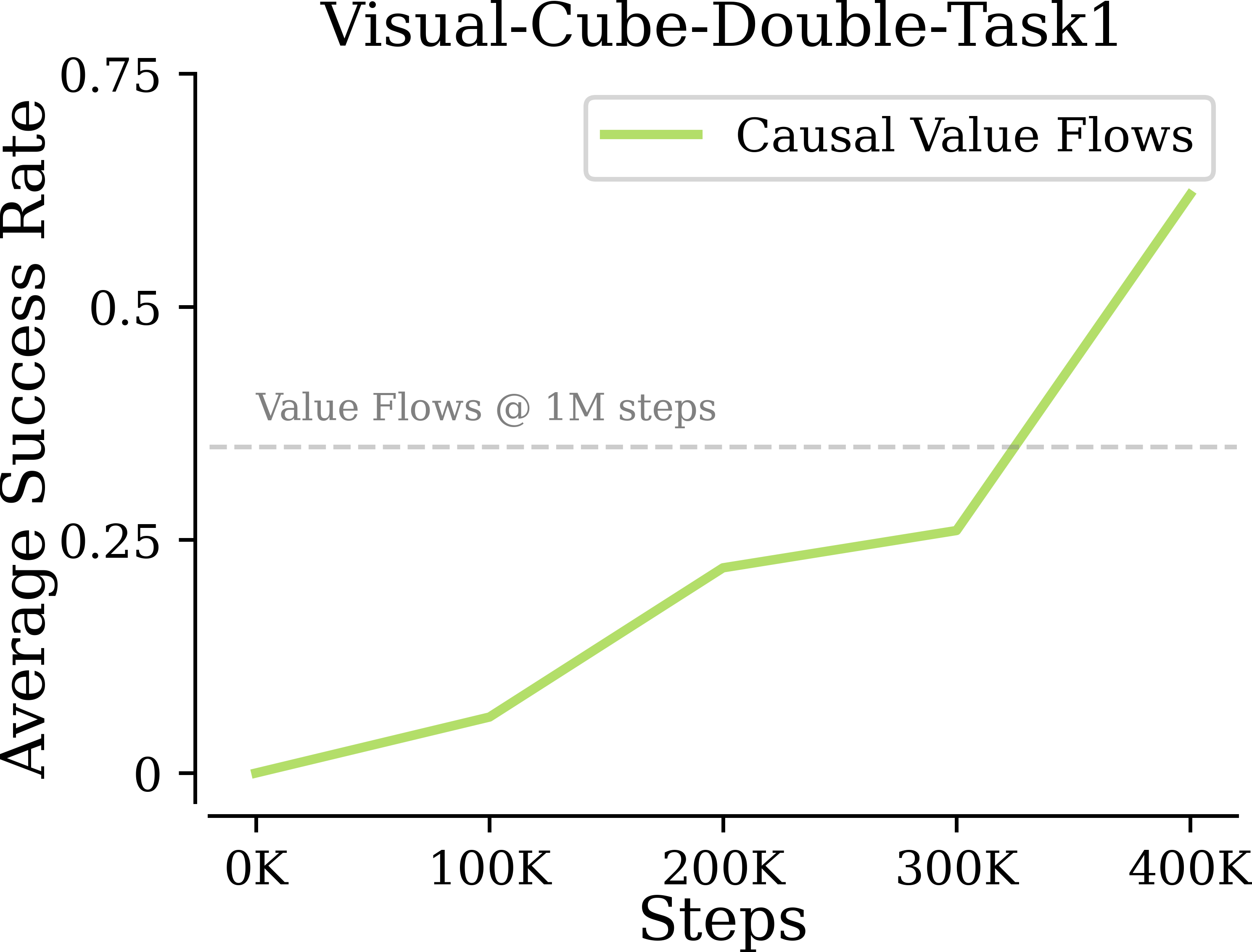}
    \caption{\textbf{Causal Value Flows outperform vanilla Value Flows.}}
    \label{fig:placeholder}
\end{wrapfigure}
Here we describe the implementation details of Causal Value Flows and preliminary offline evaluation results for applying the newly proposed causal offline RL objective to Value Flows~\citep{dong2025valueflows} as a demonstration on the versatility of our method. We adopt the discriminator coefficient and number of ensembles setup from Causal-FQL and hold other parameters as default in value flows. As in Causal-FQL, we add an extra action discriminator network, the same size as the Q-critic network. When calculating \cref{eq:approx causal q obj}, we use the values sampled by Euler method with the trained return vector field ensembles. Then we reweight the mean Q-values and add the minimum Q-value as the worst case value. Other parts of value flows stay unchanged.

Without further tuning, on task 1 of \texttt{visual-cube-double-play}, the augmented causal value flow is able to achieve over 0.6 success rate within 400K offline training steps, \textbf{almost doubles the vanilla value flow's performance}, which only achieves 0.32 with 1M offline training steps.

\end{document}